\theoremstyle{plain}
\newtheorem{theorem}{Theorem}[section]
\newtheorem{proposition}[theorem]{Proposition}
\theoremstyle{definition}
\theoremstyle{remark}
\theoremstyle{plain}
\newenvironment{manualProposition}[1]{%
	\manualPropositioninner
}{\endmanualPropositioninner}
\newcommand{\hide}[1]{}
\newcommand{\Exp}{\mathbb{E}}
\newcommand{\prob}{\mathrm{P}}
\newcommand{\argmin}[1]{\underset{#1}{\operatorname{argmin}}}
\newcommand{\R}{\mathbb{R}}
\newcommand{\w}{\mathbf{w}}
\newcommand{\vb}{\mathbf{v}}
\newcommand{\gb}{\mathbf{g}}
\newcommand{\thetab}{{\boldsymbol{\theta}}}
\newcommand{\distrib}{\mathcal{D}}
\newcommand{\lb}{\boldsymbol{l}}
\newcommand{\oneb}{\mathbf{1}}
\newcommand{\cond}{\mathcal{C}}
\newcommand{\MSBE}{MSBE}
\newcommand{\wb}{\mathbf{w}}
\newcommand{\mb}{\mathbf{m}}
\newcommand{\xb}{\mathbf{x}}
\newcommand{\yb}{\mathbf{y}}
\newcommand{\betab}{\boldsymbol{\beta}}
\newcommand{\nub}{\boldsymbol{\nu}}
\newcommand{\algcomment}[1]{$\qquad \triangleright$ {\tt\small #1}}
\newcommand{\defeq}{\overset{\text{\tiny def}}{=}}
\newcommand{\lamax}{\lambda_{\textrm{max}}}
\newcommand{\lamin}{\lambda_{\textrm{min}}}
\newcommand{\bindent}{%
	\begingroup
	\setlength{\itemindent}{12pt}
}
\newcommand{\bindentb}{%
	\begingroup
	\setlength{\itemindent}{12pt}
}
\newcommand\eindent{\endgroup}
\newif\ifbibtex
\newcommand{\citer}[1]{}
\newcommand{\citer}[1]{{\color{black}#1}}
\renewcommand{\cite}[1]{}
\newcommand{\todonext}[1]{}
\long\def\com#1{}
\newcommand\redsout{\bgroup\markoverwith{\textcolor{red}{\rule[0.5ex]{2pt}{.5pt}}}\ULon}
\newcommand\bluesout{\bgroup\markoverwith{\textcolor{blue}{\rule[0.5ex]{2pt}{.5pt}}}\ULon}
\icmltitlerunning{Toward Efficient Gradient-Based Value Estimation}
\begin{document}

\twocolumn[
\icmltitle{Toward Efficient Gradient-Based Value Estimation
			}

\begin{icmlauthorlist}
	\icmlauthor{Arsalan Sharifnassab}{yyy} 
	\icmlauthor{Richard Sutton}{yyy}
\end{icmlauthorlist}

\icmlaffiliation{yyy}{Authors are with the Department of Computing Science, University of Alberta, Canada}

\icmlcorrespondingauthor{Arsalan Sharifnassab}{sharifna@ualberta.ca}
\icmlkeywords{Reinforcement learning, value estimation, gradient-based methods}

\vskip 0.3in
]

\printAffiliationsAndNotice{}

\begin{abstract}
	Gradient-based methods for value estimation in reinforcement learning have favorable stability properties, but they are typically much slower than Temporal Difference (TD) learning methods. 
	We study the root causes of this slowness and show that Mean Square Bellman Error (MSBE) is an ill-conditioned loss function  in the sense that its Hessian has large condition-number. 
	To resolve the adverse effect of poor conditioning of MSBE on gradient based methods, we propose a low complexity batch-free proximal method that approximately follows the Gauss-Newton direction and is asymptotically robust to parameterization.
	Our main algorithm, called RANS,  is efficient in the sense that it is  significantly faster than the residual gradient methods while having almost the same computational complexity, and  is competitive with TD on the classic problems that we tested.
\end{abstract}

\section{Introduction} \label{sec:intro}
Value estimation is a core problem in reinforcement learning \citer{(Sutton \& Barto, 2018)}\cite{SuttB18}, and is a key ingredient  in several policy optimization methods, e.g., \citer{(Bhatnagar et al., 2009; Minh et al., 2015; Lillicrap et al., 2015)} \cite{Bhat09,mnih2015DQN,Lillicrap2015DDPG}.
The popular class of value estimation algorithms based on temporal difference learning via forward bootstrapping; including TD($\lambda$) \citer{(Sutton, 1988)}\cite{sutton1988TD}, Expected Sarsa \citer{(van Seijen et al., 2009)}\cite{vanSeijen2009ExpSarsa}, and Q-learning \citer{(Watkins \& Dayan, 1992)}\cite{watkins1992Qlearning} have found substantial empirical success when combined with proper policy optimization  \citer{(Minh et al., 2015; Minh et al., 2016; Lillicrap et al., 2015)}\cite{mnih2015DQN,mnih2016A3C,Lillicrap2015DDPG}.   
Nevertheless, these algorithms are not gradient-based optimization methods \citer{(Barnard, 1993)}\cite{Barn93} and their convergence cannot be guaranteed for general function approximation setting \citer{(Baird, 1995; Tsitsiklis \& Van Roy, 1997;  Brandfonbrener \& Bruna, 2019)}\cite{Bair95,tsitsiklis1996TDexample,BranB19}.
The stability problem of TD learning has inspired other classes of value estimation algorithms that involve optimizing a loss function through gradient updates. 
This  includes Residual Gradient (RG) algorithm for minimizing Mean Squared Bellman Error (MSBE) \citer{(Baird, 1995)}\cite{Bair95},  Gradient-TD  algorithms for minimizing projected Bellman error \citer{(Sutton et al., 2009; Maei et al., 2010; Maei, 2011; Hackman, 2012)}\cite{SuttMPBSSW09,maei2010toward,maei2011thesis,hackman2012msThesis}, and their extensions for optimizing a dual formulation of BE2 \citer{(Liu et al., 2015; Macua et al., 2014; Dai et al., 2017)}\cite{LiuGMP20,macua2014distributed,DaiHPBS17}.
These  algorithms enjoy the general robustness and convergence properties of Stochastic Gradient Descent (SGD), but are known to be slower than TD  in  tabular and linear function approximation settings \citer{(Baird, 1995; Schoknecht \& Merke, 2003; Gordon, 1999; Ghiassian \& Sutton, 2021)} \cite{Bair95,schoknecht2003td,gordon1999thesis,Ghiassian2021empirical4roorms}.

In this paper, we investigate the root causes of the slowness problem of gradient-based value estimation by taking a  deeper look into the landscape of MSBE, and propose linear complexity methods to alleviate these problems.
We provide theoretical results showing that MSBE is  an ill-conditioned loss function in the senses that the condition-number of its Hessian matrix is typically very large.  
This explains slowness of gradient-based value estimation methods, because gradient descent in general is slow in minimizing ill-conditioned loss functions.
In contrast, algorithms like Newton and Gauss-Newton methods are invariant to conditioning of the loss. 
Unfortunately a direct implementation of these methods requires matrix inversion, which is computationally costly even if computed incrementally. 
We propose a linear complexity incremental algorithm, called Residual Approximate Gauss-Newton (RAN), that incorporates a trace to approximate the Gauss-Newton direction and then updates the weights along that trace. We show that RAN can be equivalently formulated as a batch-free proximal algorithm. 
A weakness of RAN is that it requires double sampling \citer{(Baird, 1995)}\cite{Bair95}, which limits its use in stochastic environments. 
We propose a double-sampling-free extension of RAN by following  similar ideas that underlie GTD-type methods.
The resulting algorithms significantly outperform RG and GTD2, being  orders of magnitudes faster on the simple classic environments that we tested, while having almost similar computational complexity to RG and GTD2.

We then turn our focus to a second cause of slowness of gradient-based value estimation:
under function approximation, 
sample gradients of MSBE involve large outliers that carry important information, resulting in large variance of stochastic updates.  
Outliers of this type often appear in every episode (usually at pre-terminal transitions), and are specific to gradient-based value estimation methods (i.e. such outliers do not appear in TD learning).
We propose a general technique called outlier-splitting, which results in no information loss as opposed to the standard clipping methods. Our main value estimation algorithm, called RAN with outlier-Splitting (RANS), has linear computational complexity and has only one effective hyper-parameter (and some other hyper-parameters that can be set to their default values), thanks to its adaptive step-size mechanism.
Our empirical results on  a few classic control environments with neural network function approximation show significant improvement over
 RG, and achieving competitive performance to TD. 

\section{Background}\label{sec:background} 
We consider a discounted Markov Decision Process (MDP) defined by the tuple $(\mathcal{S},\mathcal{A}, \mathcal{R}, p, \gamma)$, where $\mathcal{S}$ is a finite set of states, $\mathcal{A}$ is a finite set of actions, $\mathcal{R}$ is a set of rewards,  $p:\mathcal{S}\times\mathcal{A}\times\mathcal{S}\times\mathcal{R}\to[0,1]$ is the environment dynamics determining the probability of the next state and immediate reward given a current state and action pair, and $\gamma\in[0,1]$ is a discount factor. 
We fix a stationary policy $\pi:\mathcal{S}\times\mathcal{A}\to[0,1]$, and let $p_\pi(s',a',r|s,a)=p(S_{t+1}=s',R_{t+1}=r|S_t=s,A_t=a)\pi(A_{t+1}=a'|S_{t+1}=s')$. 
We consider an episodic and online setting where a data stream $(S_1,A_1,R_1),(S_2,A_2,R_2),\ldots$ is generated according to the policy $\pi$.
The action-value function $q_\pi:\mathcal{S}\times\mathcal{A}\to\R$, at each state $s$ and action $a$, is the expected discounted sum of rewards obtained by   starting from state $s$ and action $a$ and following policy $\pi$. 
We then define the value function $v_\pi:\mathcal{S}\to\R$ as  $v_\pi(s)=\Exp_{a\sim\pi(\cdot|s)}[q_\pi(s,a)]$.

In value estimation, we aim to obtain an estimate  of the true action-values $q_\pi$, usually through a  function $q_\w:\mathcal{S}\times\mathcal{A}\to\R$ parameterized by a $d$-dimensional weight vector $\w$. 
Corresponding to $q_\w$ is  a Bellman residual at each state and action pair $(s,a)$, defined as
\begin{equation*}
\delta_\w(s,a) \defeq \Exp_{s',a',r\sim p_\pi(\cdot, \cdot,\cdot|s,a)} \big[r+\gamma q_\w(s',a')-q_\w(s,a)\big].
\end{equation*}
\hide{
	\subsection{Gradient-based value estimation}
}
According to Bellman equations \citer{(Sutton \& Barto, 2018; Bertsekas \& Tsitsiklis, 1996)}\cite{SuttB18,BertT96},  $q_\wb=q_\pi$  if and only if $\delta_\wb(s,a)=0$ for all $(s,a)\in\mathcal{S}\times\mathcal{A}$.
In this view, $MSBE(\wb)$, defined below, serves as a proxy for the quality of  estimates $\wb$:
\begin{equation}\label{eq:MSBE}
MSBE_{\distrib}(\wb) \,\defeq\, \Exp_{(s,a) \sim \distrib}\big[\delta_\wb(s,a)^2\big],
\end{equation}
where $\distrib$ is some distribution over states and action pairs. 
When the distribution is online, we drop the subscript $\distrib$ and write $MSBE(\cdot)$. For simplicity of notation, we also write $\Exp_{s,a}[\cdot]$ to denote the expectation with respect to state and action pairs sampled from the online distribution.
In the same vein, we consider a parameterized estimate $v_\w:\mathcal{S}\to\R$ of value function $v_\pi$, and let 
\begin{equation}\label{eq:MSBE v}
MSBE^V_{\distrib}(\wb) \,\defeq\, \Exp_{s \sim \distrib}\big[\delta_\wb(s)^2\big],
\end{equation}
where $\delta_\wb(s)=\Exp_{a\sim \pi(\cdot|s)}[\delta_\wb(s,a)]$, and $\distrib$ is some distribution over states.

Gradient-based value estimation methods use gradient-based optimization algorithms to minimize MSBE or other related objectives such as  MSPBE \citer{(Sutton et al., 2009)}\cite{SuttMPBSSW09}. 
The first and simplest  method in this category is the RG algorithm \citer{(Baird, 1995)}\cite{Bair95}.
In this algorithm, to obtain an unbiased sample estimate of $\nabla_\wb(\delta_\wb(S_t,A_t)^2)$,  we require independent samples $(S_{t+1},A_{t+1},R_t)$ and $(S_{t+1}',A_{t+1}', R_t')$ from $p_\pi(\cdot, \cdot,\cdot|S_t,A_t)$. 
For simplicity of notation, at time $t$, we let
\begin{align}
\delta_t &\defeq R_t+\gamma q_{\wb}(S_{t+1},A_{t+1})-q_{\wb}(S_{t},A_{t}),      \label{eq:delta t}\\
\delta_t' &\defeq R_t'+\gamma q_{\wb}(S_{t+1}',A_{t+1}')-q_{\wb}(S_{t},A_{t}),  \label{eq:delta' t}
\end{align}
where $\wb_t=\wb$.
The RG update is then 
\begin{equation}\label{eq:RG with DS}
\wb\leftarrow \wb -\alpha \delta_t'\nabla_\wb\delta_t.
\end{equation}
The requirement for two independent sample transitions at time $t$ is called double sampling \citer{(Sutton \& Barto, 2018)}\cite{SuttB18}. 

In stochastic environments, double sampling is possible only if we have a correct model of the world. In other words, MSBE minimizer is \emph{not learnable} if an exact model of the underlying stochastic environment is not available, which is the case in real-world applications \citer{(Sutton \& Barto, 2018)}\cite{SuttB18}.
A general technique to circumvent double sampling is using Fenchel duality  to obtain an equivalent saddle point formulation of MSBE \citer{(Dai et al., 2017, Du et al., 2017)}\cite{DaiHPBS17, du2017stochastic} as
\begin{equation}
\min_\wb\max_{\hat{\delta}(\cdot,\cdot)} \Exp_{s,a}\left[\delta_\wb(s,a) \,\hat\delta(s,a) - \frac12 \hat\delta(s,a)^2  \right],
\end{equation}
where $\hat\delta(s,a)$ is an auxiliary variable that serves as a proxy of $\delta_\wb(s,a)$. 
In practice, one can consider a parametric approximation $\hat{\delta}_{\thetab}(\cdot,\cdot)$ of $\hat\delta(\cdot,\cdot)$,
and perform gradient updates on the resulting minimax problem $\min_\wb\max_{\thetab} \Exp_{s,a}\left[\delta_\wb(s,a) \,\hat\delta_\thetab(s,a) - \frac12 \hat\delta_\thetab(s,a)^2  \right]$:
\begin{equation}\label{eq:gtd2}
\begin{split}
\wb&\leftarrow \wb -\alpha \hat\delta_\thetab(S_t,A_t)\nabla_\wb\delta_t,\\
\thetab&\leftarrow  \thetab+ \eta\big(\delta_t - \hat\delta_\thetab(S_t,A_t)\big)\nabla_\thetab \hat\delta_\thetab(S_t,A_t),
\end{split}
\end{equation}
\citer{(Sutton et al., 2009; Liu et al., 2020; Dai et al., 2017)}\cite{SuttMPBSSW09,LiuGMP20,DaiHPBS17}. 
Intuitively, this is similar to the RG algorithm in \eqref{eq:RG with DS} except for using the parametric approximation $\hat\delta_\thetab(S_t,A_t)$ instead of  $\delta_t'$, and updating
$\hat\delta_\thetab(s,a)$ by SGD on $\Exp_{s,a}\big[(\hat\delta_\thetab(s,a)-\delta_\wb(s,a))^2\big]$.
The GTD2 algorithm \citer{(Sutton et al., 2009)}\cite{SuttMPBSSW09} is a special case of \eqref{eq:gtd2} in which $q_\wb$ and $\hat\delta_\thetab$ are linear approximations of the form $q_\wb(s,a)=\boldsymbol{\phi}_{s,a}^T\wb$ and $\hat\delta_\thetab(s,a)=\boldsymbol{\phi}_{s,a}^T\thetab$, for feature vectors $\boldsymbol{\phi}_{s,a}$ \citer{(Liu et al., 2020; Dai et al., 2017)}\cite{LiuGMP20,DaiHPBS17}.

\section{MSBE loss is ill-conditioned} \label{sec:cond}
The condition-number of  a symmetric square matrix, $H$, is defined as the ratio of its largest to smallest singular values, 
$\max_{\xb:\|\xb\|=1} |\xb^TH\xb|/\min_{\yb:\|\yb\|=1}  |\yb^TH\yb|$.  
For a quadratic function $f(\xb)=\xb^TH\xb$, 
we define the condition-number, $\cond(f)$, of $f$ as the condition-number of its Hessian matrix $H$. 
Intuitively, level sets (or contours) of a convex quadratic function have an elliptical shape, and the condition-number $\cond(f)$ equals the squared ratio between the largest and the smallest diameters of each of these ellipsoids (see Fig.~\ref{fig:cond traj}). 
We say that $f$ is ill-conditioned if $\cond(f)$ is very large. 
Then, the level sets of an ill-conditioned quadratic function have the shape of ellipsoids that are thin.  
It is known that the 
convergence rate of the gradient descent on a quadratic loss $f$ scales with $\cond(f)$ \citer{(Polyak, 1964)}\cite{Poly64}, which can be very slow for  ill-conditioned loss functions.

In this section, we consider linear function approximation.
In this case, $MSBE^V_\distrib(\cdot)$ defined in \eqref{eq:MSBE v} is a convex quadratic  function. We denote the condition-number of $MSBE^V_\distrib(\cdot)$ under uniform distribution $\distrib$ by $\cond$.  We let $l$ be the average episode length, defined as the expected time until termination when starting from a state, uniformly averaged over all states. We also let $h\defeq\Exp_{s\sim\textrm{unif}}\big[\prob(S_{t+1}=s|S_t=s)\big]$ be the the average self-loop probability. 
Note that $h$ is typically much smaller than $1$.

\begin{theorem}\label{th:cond}
In the tabular case, the following statements hold for any discount factor $\gamma\in [0,1]$:
\begin{itemize}
\item[a)]
For any MDP and under any policy, we have
\begin{equation} \label{eq:C1}
	\cond \ge \frac{(1-\gamma h)^2}{4} \, \min\left( \frac1{(1-\gamma)^2},\, l^2\right)
\end{equation}
where $l$ is the average episode length and $h$ is the average self-loop probability. 
\item[b)]
For any $n\ge1$, there exists an $n$-state MDP and a policy for which $\cond\ge\gamma^4 n^2/(1-\gamma)^2$.
\end{itemize}
\end{theorem}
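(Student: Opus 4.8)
The plan is to reduce both parts to estimating the extreme singular values of the matrix $A \defeq I-\gamma P$, where $P$ is the (sub)stochastic state-to-state transition matrix induced by $\pi$. In the tabular case $v_\wb(s)=w_s$, the Bellman residual vector is $\boldsymbol{\delta}_\wb=\bar{\mathbf r}-A\wb$ (with $\bar{\mathbf r}$ the expected-reward vector), so $MSBE^V$ under the uniform distribution is the convex quadratic $\frac1n\|\bar{\mathbf r}-A\wb\|^2$ with Hessian $\frac2n A^\top A$. Since the condition-number is scale invariant, $\cond=\lamax(A^\top A)/\lamin(A^\top A)=\sigma_{\max}(A)^2/\sigma_{\min}(A)^2$. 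Thus it suffices to lower-bound $\sigma_{\max}(A)$ and upper-bound $\sigma_{\min}(A)$.

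For part (a), I would lower-bound $\sigma_{\max}(A)$ from a single diagonal entry: $\sigma_{\max}(A)^2\ge(A^\top A)_{ss}\ge A_{ss}^2=(1-\gamma P_{ss})^2$, and maximizing over $s$ gives $\sigma_{\max}(A)\ge 1-\gamma\min_s P_{ss}\ge 1-\gamma h$, because the minimum diagonal entry is at most the average $h$. To upper-bound $\sigma_{\min}(A)$ I would feed $A$ the test vector $\boldsymbol{\tau}\defeq(I-P)^{-1}\oneb$, whose entries are the expected episode lengths so that $\oneb^\top\boldsymbol{\tau}=nl$. Writing $(I-\gamma P)=(I-P)+(1-\gamma)P$ and using $P\boldsymbol{\tau}=\boldsymbol{\tau}-\oneb$ collapses the image to $A\boldsymbol{\tau}=\gamma\oneb+(1-\gamma)\boldsymbol{\tau}$, while Cauchy--Schwarz gives $\|\boldsymbol{\tau}\|\ge\oneb^\top\boldsymbol{\tau}/\sqrt n=\sqrt n\,l$. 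Hence $\sigma_{\min}(A)\le\|A\boldsymbol{\tau}\|/\|\boldsymbol{\tau}\|\le \gamma\sqrt n/\|\boldsymbol{\tau}\|+(1-\gamma)\le \tfrac1l+(1-\gamma)\le 2\max(\tfrac1l,1-\gamma)$; the continuing case $l=\infty$ is covered separately by the test vector $\oneb$, for which $A\oneb=(1-\gamma)\oneb$. Squaring gives $\sigma_{\min}(A)^2\le 4/\min\!\big(l^2,1/(1-\gamma)^2\big)$, and dividing the two bounds yields \eqref{eq:C1}.

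For part (b), I would exhibit the star MDP: states $1,\dots,n$ with state $1$ self-looping and every other state transitioning deterministically to state $1$, so that $P=\oneb\,\mathbf{e}_1^\top$ and $A=I-\gamma\oneb\,\mathbf{e}_1^\top$ is a rank-one perturbation of the identity. Then $A^\top A-I$ has rank at most two, so $n-2$ eigenvalues of $A^\top A$ equal $1$; the remaining two, $\lambda_1\ge\lambda_2$, are pinned down by two identities: the matrix-determinant lemma gives $\lambda_1\lambda_2=\det(A^\top A)=\det(A)^2=(1-\gamma)^2$, and the Frobenius norm gives $\lambda_1+\lambda_2=\mathrm{tr}(A^\top A)-(n-2)=2-2\gamma+n\gamma^2$. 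From $\lambda_2\le\sqrt{\lambda_1\lambda_2}=1-\gamma$ I get $\lambda_1\ge(2-2\gamma+n\gamma^2)-(1-\gamma)=1-\gamma+n\gamma^2\ge n\gamma^2$, whence $\cond=\lambda_1/\lambda_2=\lambda_1^2/(1-\gamma)^2\ge\gamma^4 n^2/(1-\gamma)^2$ (the degenerate small-$\gamma$ regime in which $\lambda_2>1$ makes the claimed bound $\le 1$ and is therefore trivial).

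The main obstacle is locating, in part (a), the test vector that ties $\sigma_{\min}(A)$ to the average episode length: the bound must degrade gracefully between the short-episode regime (where $1/l$ dominates) and the long-horizon regime (where $1-\gamma$ dominates), which is exactly what the single vector $\boldsymbol{\tau}$ achieves once its image is simplified. In part (b) the conceptual point is that $A=I-\gamma\oneb\,\mathbf{e}_1^\top$ is highly non-normal---all its eigenvalues are benign---so the blow-up of $\cond$ must be read from singular values rather than from the spectrum; the rank-one structure is precisely what makes the determinant and trace suffice to compute the two nontrivial singular values exactly.
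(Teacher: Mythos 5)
Your proposal is correct, and part (a) essentially reproduces the paper's argument: the same reduction of $\cond$ to the extreme eigenvalues of $(I-\gamma P)^T(I-\gamma P)$ (note the paper's $A$ is this Gram matrix, while your $A$ is $I-\gamma P$ itself), the same test vector $\boldsymbol{\tau}=(I-P)^{-1}\oneb$ of expected episode lengths (the paper's $\lb$, satisfying $\lb=\oneb+P\lb$), the same collapse $(I-\gamma P)\lb=(1-\gamma)\lb+\gamma\oneb$, and the same Cauchy--Schwarz step $\|\lb\|\ge \sqrt{n}\,l$. Your bound on $\sigma_{\max}$ via a single diagonal entry, $\max_s(1-\gamma P_{ss})^2\ge(1-\gamma h)^2$, is a trivially different route to the paper's trace bound $\lamax\ge\frac1n\mathrm{trace}(A)\ge(1-\gamma h)^2$ (max versus mean of the same quantities), and your triangle-inequality estimate $\sigma_{\min}\le 1/l+(1-\gamma)$ replaces the paper's exact expansion $\lamin\le(1-\gamma+\gamma/l)^2$, with the same factor-$4$ loss at the end; your separate handling of $l=\infty$ is a small bonus the paper omits (it implicitly works in the episodic setting where all $l_i$ are finite). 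In part (b) you use the same example chain up to relabeling (every state feeding a single self-looping state), but your finishing technique genuinely differs: the paper bounds $\lamax$ and $\lamin$ with explicit Rayleigh test vectors $[-\epsilon,\ldots,-\epsilon,1]^T$ and $[\gamma,\ldots,\gamma,1]^T$, whereas you exploit the rank-one structure of $I-\gamma\oneb\mathbf{e}_1^T$ to pin down the two nontrivial eigenvalues of the Gram matrix exactly via the determinant $(1-\gamma)^2$ and the Frobenius norm $2-2\gamma+n\gamma^2$; this buys exact singular values and makes vivid your non-normality point, at the cost of requiring $n\ge2$ for the rank-two perturbation count --- a restriction the paper's proof shares, since its $\epsilon=\gamma/(n-1)$ is likewise undefined at $n=1$. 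Two cosmetic nits: since $\lamax=\max(\lambda_1,1)$, your final step should read $\cond\ge\lambda_1/\lambda_2$ rather than equality (the inequality is all you need), and because $\lambda_2\le\sqrt{\lambda_1\lambda_2}=1-\gamma\le1$ always holds, the parenthetical about a degenerate regime with $\lambda_2>1$ is vacuous.
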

The proof is given in Appendix~\ref{app:proof cond val}.
A similar result also holds for the condition-number of $MSBE$ defined in \eqref{eq:MSBE} (see Proposition~\ref{prop:cond q} in Appendix~\ref{app:proof cond action-val}).
Theorem~\ref{th:cond} shows that  MSBE is typically ill-conditioned in the tabular case. This explains the slow convergence of gradient-based methods for minimizing MSBE. 
As an example, the bound in \eqref{eq:C1} implies that for  $\gamma=.99$ and for any MDP and policy pair with average episode length at least $100$ and average self-loop probability no larger than $0.1$, we have $\cond > 2000$.
Moreover, Theorem~\ref{th:cond}~(b) implies  that for  $\gamma=.99$, there is a $100$-state Markov chain for which $\cond>96,000,000$.

Lower bounds similar to Theorem~\ref{th:cond} are not possible for  non-tabular linear function approximation. 
This is because different feature representations can improve  or worsen the condition-number. To see why, note that in the linear function approximation case and under uniform state distribution, $\MSBE^V_{\textrm{unif}}(\wb) =\wb^T \Phi^T(I-\gamma P)^T(I-\gamma P)\Phi \wb$, where $\Phi$ is  an $n\times d$ matrix, each row of which is a feature vector of a state; and $P$ is the transition matrix. For the specific choice $\Phi = (I-\gamma P)^{-1}$ we obtain $\cond=1$, while for the case that $\Phi$ is not full-rank, we  have $\cond=\infty$.

\begin{figure}[t!]
	\centering 
	\includegraphics[width=.65\linewidth]{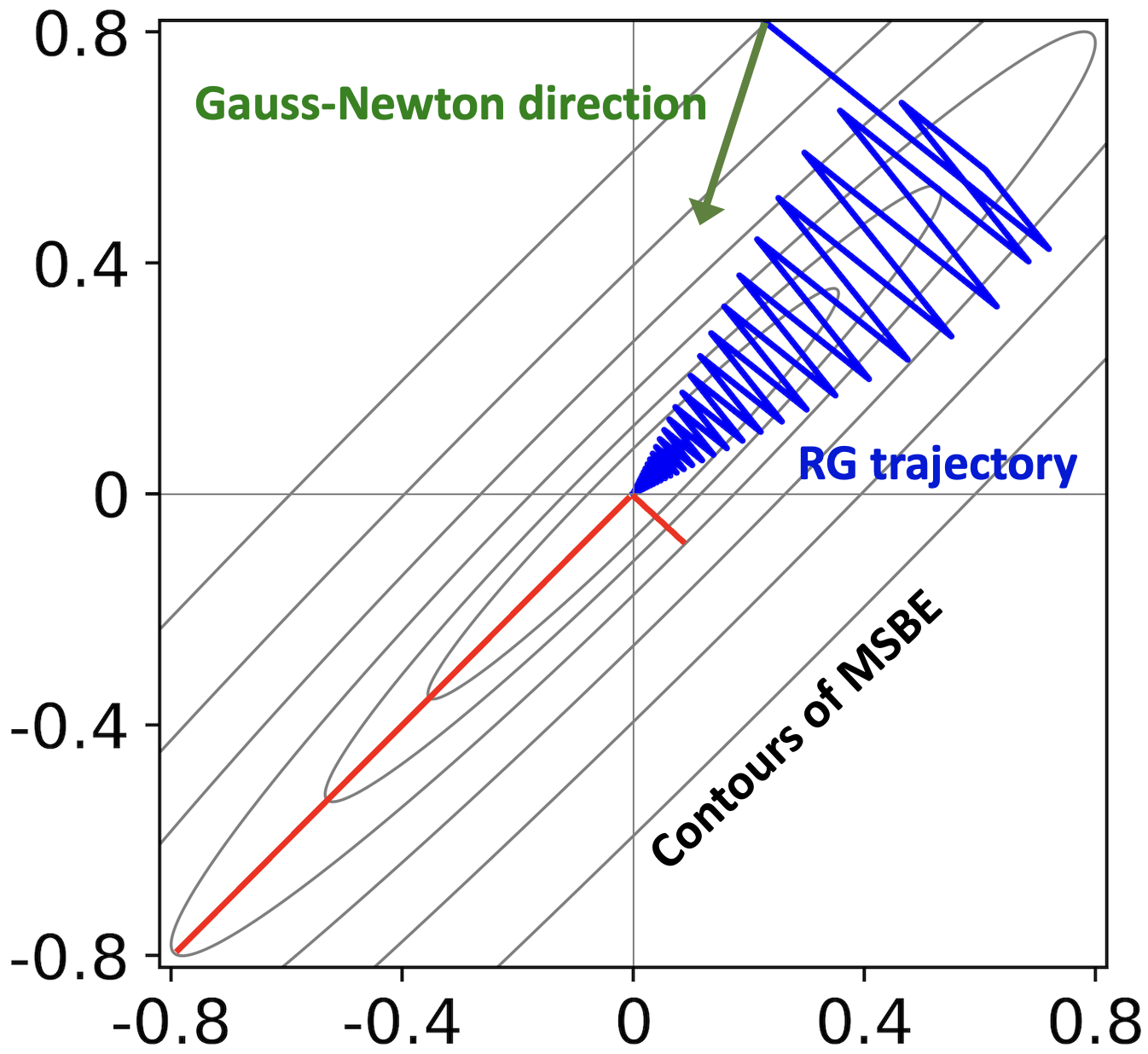}
	\caption{Level sets of MSBE (gray curves)  in a 2-state loop environment with $p(s_0\to s_1)=p(s_1\to s_0)=1$ for $\gamma=0.8$. Here, condition number of MSBE is 81, and is equal to the squared ratio between the diameters (red) of each ellipsoid. The solution trajectory of  RG  (blue) for $\alpha=.9$ and the Gauss-Newton direction (green) are also depicted. In this environment, $\cond=O(1/(1-\gamma)^2)$ (see Theorem~\ref{th:cond}), which rapidly grows for larger $\gamma$.
	}\label{fig:cond traj}
\end{figure}

In general, since underparameterized function approximation reduces  parameters dimension, it usually improves condition-number.
\hide{We establish this claim in a specific 
regime for random feature matrices by employing tools from random matrix theory in Appendix~\ref{app:cond large random matrix}[To Do, or delete].}
Fig.~\ref{fig:boyan} illustrates dependence of $\cond$ on the number of features, $d$, in an extended Boyan chain environment \citer{(Boyan, 2002)}\cite{Boya02} with 200 states and with random binary features (see Appendix~\ref{app:empirical details boyan} for details). We observe that smaller $d$ results in better condition-number; but this comes at the cost of larger value-error at MSBE minimum (the red curve in Fig.~\ref{fig:boyan}), where by value-error  we mean $\Exp_{s\sim\textrm{unif}}\big[\Exp_{a\sim\pi(\cdot|s)}[\|q_\wb(s,a)-q_\pi(s,a)\|^2]\big]$. 
See Appendix~\ref{app:cond large random matrix} for more experiments on  condition number under linear function approximation.

\begin{figure}[t!]
	\centering 
	\includegraphics[width=.8\linewidth]{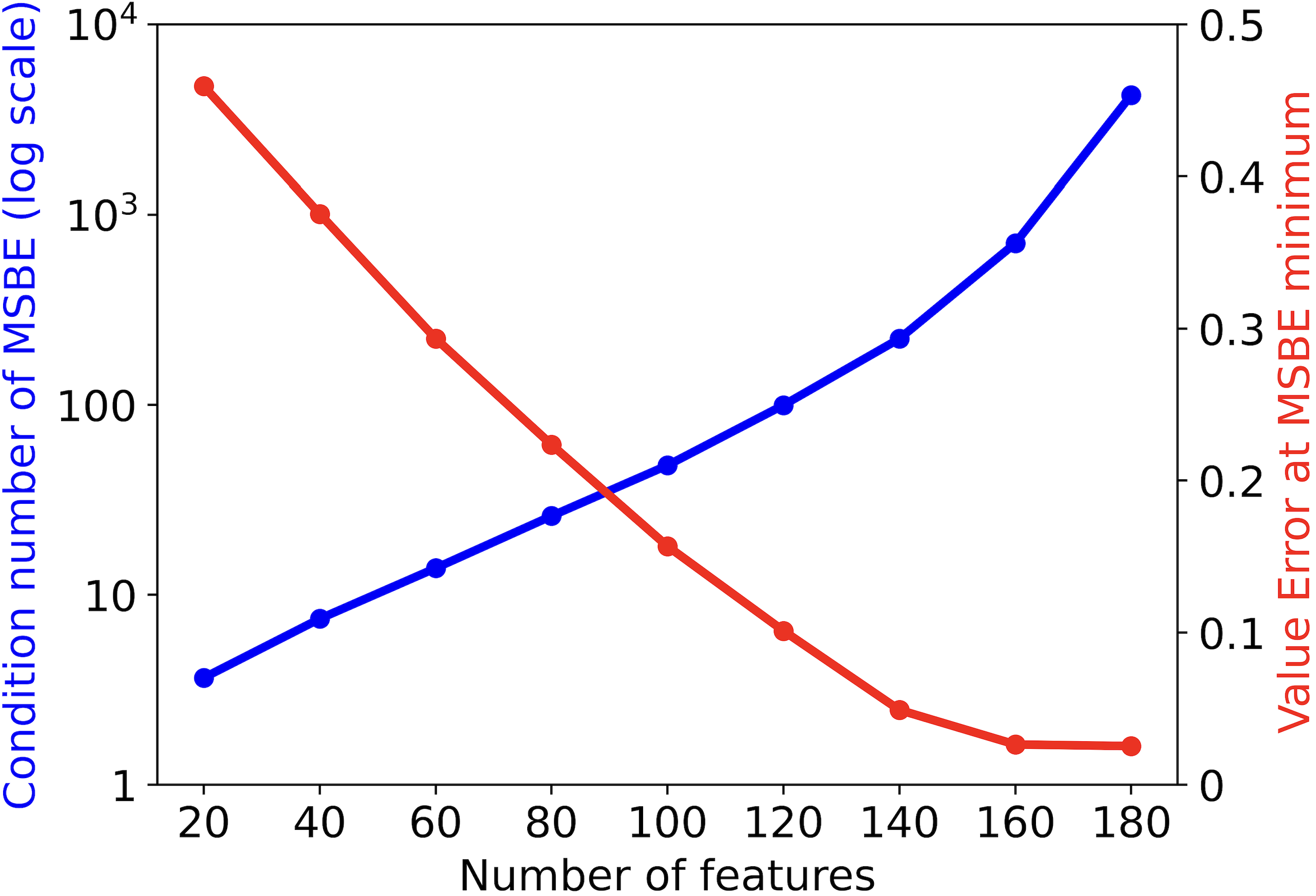}
	\caption{Condition-number of MSBE (blue) and value-error at MSBE minimizer (red) versus number of features, in a 200-state extended Boyan chain with random binary features.}\label{fig:boyan}
\end{figure}

\hide{
\begin{figure}[t!]
	\centering 
	\begin{subfigure}[t]{0.8\linewidth}
		\centering
		\includegraphics[width=1\linewidth]{BE2_conditioning_BoyanChain_random_binary_features.png}
		\caption{}
	\end{subfigure}%
\\
	\begin{subfigure}[t]{.8\linewidth}
		\centering
		\includegraphics[width=1\linewidth]{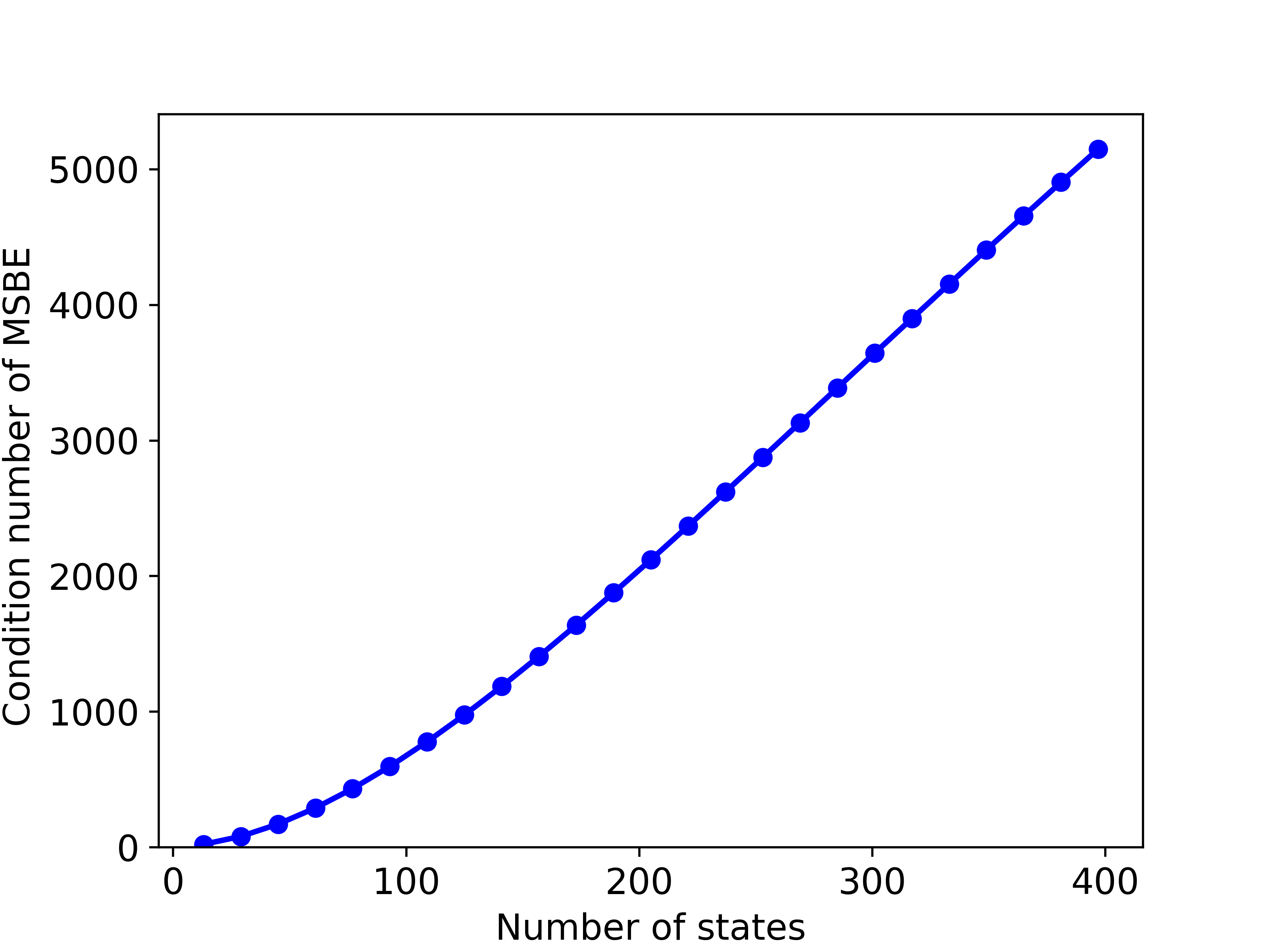}
		\caption{}
	\end{subfigure}
		\caption{Condition-number of MSBE in extended Boyan chain under linear function approximation. (a) shows condition-number (blue) and value-error at MSBE minimizer (red) versus number of features, in a 200-state extended Boyan chain with random binary features. (b) depicts the dependence of condition-number on the  number of states, in extended Boyan chains with standard features ($n=4d-3$). }\label{fig:boyan full}
\end{figure}
}

\section{A review of  the Gauss-Newton method}
Consider an expected loss function of the form  $F(\wb)=\Exp_f[f^2(\wb)]$, and the associated Hessian matrix $H_F=\Exp[\nabla f \nabla f^T]+\Exp[f\,H_f]$, where $H_f$ denotes Hessian of sample function $f$. 
The first term on the right hand side, $\Exp[\nabla f \nabla f^T]$, is called the Gauss-Newton matrix and is denoted by $G$. The Gauss-Newton algorithm then updates $\wb$ as $\wb\leftarrow \wb-\alpha G^{-1 }\nabla F(\wb)$. 
In the special case that functions $f$ are linear, we have $H_f=0$ and thereby $H_F=G$. 
In this case, Gauss-Newton and  Newton methods become equivalent.
However, the Gauss-Newton algorithm has two advantages in the   non-linear case. 
Firstly, $G^{-1} \nabla F(\wb)$ is always a descent direction, as opposed to the Newton updates that may climb uphill and  converge to local maxima or saddle points \citer{(Nesterov \& Polyak, 2006)}\cite{nesterov2006CRN}. 
Secondly, $G$ can be computed in terms of gradients while $H$ entails second order derivatives which are not as easily accessible in certain settings \citer{(Nocedal \& Wright, 1999)}\cite{nocedal1999numerical}.

As opposed to gradient descent which is prohibitively slow in ill-conditioned problems, Newton and Gauss-Newton methods are invariant to  conditioning of the loss function. 
Some recent works proposed using Gauss-Newton method for value estimation \citer{(Gottwald et al., 2021)}\cite{GottGSD21}. However, these algorithms require matrix inversion, which is computationally costly even if 
computed incrementally  via Sherman-Morrison formula with quadratic complexity \citer{(Sherman \& Morrison, 1950)}\cite{Sherman1950adjustment}. 
In the next section, we propose an incremental low complexity approach that approximately follows the Gauss-Newton direction.

\section{Our first algorithm: RAN} \label{sec:alg1}
The Gauss-Newton direction for minimizing MSBE is $\mb_{GN}(\wb)=G_\wb^{-1} \nabla MSBE(\wb)/2$, where $G_\wb=\Exp_{s,a}\big[\nabla\delta_\wb(s,a)\,\nabla\delta_\wb(s,a)^T\big]$ is the Gauss-Newton matrix and 
$\nabla MSBE(\wb) =2\Exp_{s,a}\big[\delta_\wb(s,a)\,\nabla\delta_\wb(s,a)\big]$. 
Then, $\mb_{GN}$ is the minimizer of the following quadratic function:
\begin{equation}\label{eq:quad loss}
	L(\mb) \defeq \frac12\, \Exp_{s,a}\left[   \big( \delta_\wb(s,a) -\nabla \delta_\wb(s,a)^T \mb \big)^2    \right]  .
\end{equation}
This is because for any $\mb$,
\begin{equation*}
	\begin{split}
	\nabla_\mb L(\mb) &=  \Exp_{s,a}\left[   \big( \nabla \delta_\wb(s,a)^T \mb - \delta_\wb(s,a)\big)\nabla \delta_\wb(s,a)  \right]\\
	&=G_\wb \mb - \nabla MSBE(\wb)/2,
	\end{split}
\end{equation*}
and therefore $\nabla_\mb L(\mb_{GN})=\mathbf{0}$.
We follow a two time scale approach  \citer{(Yao et al., 2009, Bhatnagar et al., 2009; Dabney \& Thomas, 2014)}\cite{yao2009, Bhat09,DabnT14} 
to incrementally find an approximate minimizer $\mb$ of $L$  and update $\wb$ along that direction.
More concretely, given a $\beta>0$ and $\lambda\in[0,1]$, at time $t$, we update $\mb$ along an unbiased sample gradient of $\beta L(\mb)  +(1-\lambda)\|\mb\|^2$, 
\begin{equation} \label{eq:2TS delta'_T}
\begin{split}
\mb&\leftarrow  \lambda  \mb+ \beta\big(\delta_t'-\mb^T\nabla\delta_t'\big)\nabla \delta_t,
\end{split}
\end{equation}
where $\delta_t$ and $\delta'_t$ are defined in \eqref{eq:delta t} and \eqref{eq:delta' t}, and $(1-\lambda)\|\mb\|^2$ is a Levenberg–Marquardt regularizer \citer{(Marquardt, 1963)}\cite{Marq63}\footnote{
	We have empirically observed that $\lambda=1$ often leads to slow convergence, because it causes large inertia in $\mb$, and therefore  large oscillations in $\wb$. The  best performance is achieved for $\lambda\in(0.99,0.9999)$. }.
We then update $\wb$ along $\mb$, i.e, $\wb\leftarrow \wb-\alpha \mb$.

Algorithm~\ref{alg:ran} gives the pseudo code of the RAN algorithm. For better stability and faster convergence, the update of $\mb$ in RAN is  of the form
\begin{equation} \label{eq:2TS}
\begin{split}
\mb&\leftarrow  \lambda  \mb+ \beta\big(\delta_t'-\mb^T\nabla\delta_t\big)\nabla \delta_t.
\end{split}
\end{equation}
which is the same as \eqref{eq:2TS delta'_T} except for using $\nabla\delta_t$ instead of $\nabla\delta_t'$.
The  updates in \eqref{eq:2TS} have lower variance compared to \eqref{eq:2TS delta'_T}, and additionally $ \nabla \delta_t \nabla\delta_t^T$ in \eqref{eq:2TS} is positive semi-definite, as opposed to the finite sample estimate of the Gauss-Newton matrix $\frac1\tau \sum_{t=1}^{\tau} \nabla \delta_t \nabla\delta_t'^T$ in \eqref{eq:2TS delta'_T}. 
For fixed $\wb$, the update in \eqref{eq:2TS}   is in expectation along 
\begin{equation} \label{eq:exp ran dir}
-\big(\Exp_t[\nabla\delta_t \nabla\delta_t^T]+(1-\lambda)I\big)^{-1}\nabla MSBE(\wb).
\end{equation}

In Appendix~\ref{app: proximal view of RAN}, we provide further intuition for RAN, by presenting a derivation of Algorithm~\ref{alg:ran} as a proximal method with momentum for minimizing MSBE. In this view,  $\mb$ serves as a momentum of MSBE gradients, to which we add a correction term equal to the gradient of a penalty function that aims to regularize the change in  $\delta_\wb(s,a)$ for all state action pairs $(s,a)$.

\begin{algorithm}[tb]
	\caption{\quad RAN}
	\label{alg:ran}
	\begin{algorithmic}
		\STATE {\bfseries Parameters:} step-sizes $\alpha,\beta$, and decay parameter $\lambda$
		\STATE {\bfseries Initialize:} $\mb=\mathbf{0}$ and $\wb$
		\FOR{$t=1,2,\ldots$}
		\STATE consider $\delta_t$ and $\delta'_t$ defined in \eqref{eq:delta t} and \eqref{eq:delta' t}, respectively
		\STATE $\mb \leftarrow  \lambda  \mb+ \beta \,\delta'_t\,\nabla \delta_t$
		\STATE $\mb \leftarrow   \mb-\beta (\mb^T\nabla\delta_t) \nabla \delta_t$
		\STATE $\wb\leftarrow \wb-\alpha\mb$
		\ENDFOR
	\end{algorithmic}
\end{algorithm}

Convergence of the RAN algorithm can be shown in two-time-scale regime where $\alpha_t,\beta_t\to0$, with $\alpha$ diminishing faster than $\beta$ (i.e., $\alpha_t/\beta_t\to0$)\footnote{
	Note that the two-time-scale view  is only for the purpose of convergence analysis, and in practice we consider fixed or adaptive step-sizes whose ratio needs not go to zero.}.
Convergence of such two-time-scale algorithms is well-studied \citer{(Kushner \& Yin, 2003; Konda \& Tsitsiklis, 1999; Bhatnagar et al., 2009)}\cite{KushY03, KondT99, Bhat09}, 
under some smoothness and irreducibly conditions.
In Appendix~\ref{app:convergence proof}, we discuss different conditions for convergence of Algorithm~\ref{alg:ran} in the two-time-scale regime.
Moreover, in this regime, RAN is robust to reparameterization:
\begin{proposition}[Informal]\label{prop:invariance}
	For $\lambda=1$ and asymptotically small step-sizes $\alpha\to0$ and $\alpha/\beta\to0$, the trajectory  of $\wb$ in the RAN algorithm is invariant to any differentiable and bijective non-linear transformation on parameterization. 
\end{proposition}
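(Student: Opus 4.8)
The plan is to show that, in the stated asymptotic regime, the $\wb$-iterates of RAN track the continuous-time Gauss--Newton flow, and then to prove that this flow is \emph{equivariant} under reparameterization, which is precisely the claimed invariance of the $\wb$-trajectory. First I would reduce the discrete algorithm to an ODE. Since $\lambda=1$ kills the Levenberg--Marquardt term and $\alpha/\beta\to0$, the fast variable $\mb$ equilibrates, for each frozen value of the slow variable $\wb$, to the minimizer of the inner quadratic $L(\mb)$ in \eqref{eq:quad loss}. By the computation following \eqref{eq:quad loss}, that minimizer is the Gauss--Newton direction $\mb_{GN}(\wb)=G_\wb^{-1}\nabla\MSBE(\wb)/2$ (consistent with \eqref{eq:exp ran dir} at $\lambda=1$), provided $G_\wb$ is nonsingular. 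Standard two-time-scale stochastic-approximation arguments then give that the slow iterate tracks the ODE
\begin{equation*}
\dot{\wb} \;=\; -\,\mb_{GN}(\wb) \;=\; -\,G_\wb^{-1}\,\nabla\MSBE(\wb)/2 ,
\end{equation*}
so it suffices to prove that the solution curves of this ODE are carried into one another by the reparameterization.

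Next I would record how each ingredient transforms under a differentiable bijection $\thetab=\phi(\wb)$ with Jacobian $J=\partial\phi/\partial\wb$. Because the Bellman residual depends on the weights only through the value estimates, which are unchanged by a relabeling of parameters, we have $\delta_\thetab=\delta_\wb$ at corresponding points, and the chain rule gives $\nabla_\thetab\delta = J^{-T}\nabla_\wb\delta$. Consequently $G_\thetab = J^{-T}G_\wb J^{-1}$ and $\nabla_\thetab\MSBE = J^{-T}\nabla_\wb\MSBE$. Substituting these into the definition of the Gauss--Newton direction yields the covariance relation
\begin{equation*}
\mb_{GN}^{\thetab} \;=\; G_\thetab^{-1}\nabla_\thetab\MSBE/2 \;=\; J\,G_\wb^{-1}\nabla_\wb\MSBE/2 \;=\; J\,\mb_{GN}^{\wb} .
\end{equation*}
Equivalently, the inner least-squares objective satisfies $L^{\thetab}(J\mb)=L^{\wb}(\mb)$, so its minimizer transforms by $J$.

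Finally I would assemble these pieces. If $\wb(t)$ solves the $\wb$-ODE and we set $\thetab(t)=\phi(\wb(t))$, then the chain rule gives $\dot{\thetab}=J\dot{\wb}=-J\,\mb_{GN}^{\wb}=-\mb_{GN}^{\thetab}$, so $\thetab(t)$ solves the $\thetab$-ODE with the corresponding initial condition. By uniqueness of solutions (which holds under the smoothness assumptions on $q_\wb$ and $\phi$ that render $\mb_{GN}$ locally Lipschitz), running RAN in the $\thetab$-parameterization and mapping back through $\phi^{-1}$ reproduces exactly the $\wb$-trajectory obtained by running RAN directly in $\wb$; this is the asserted invariance.

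The main obstacle I anticipate is the first step: rigorously justifying the passage from the stochastic two-time-scale recursion to the Gauss--Newton ODE. This requires verifying the hypotheses of the two-time-scale theory, in particular that, for each fixed $\wb$, the fast ODE for $\mb$ has $\mb_{GN}(\wb)$ as a globally asymptotically stable equilibrium (which needs $G_\wb$ positive definite, a genuine assumption once $\lambda=1$ removes the regularizer) and that the resulting slow ODE has unique, non-escaping solutions. The remaining transformation algebra is routine, but care is needed with the invertibility of $J$ and $G_\wb$ so that all inverses above are well defined.
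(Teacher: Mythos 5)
Your proposal follows essentially the same route as the paper's Appendix~\ref{app:invariance}: pass to the two-time-scale Gauss--Newton ODE and verify via the chain rule that the Gauss--Newton direction transforms covariantly ($\mb_{GN}\mapsto J\,\mb_{GN}$) under the Jacobian of the reparameterization, so the two flows are conjugate and the $\wb$-trajectories coincide. The only cosmetic discrepancy is that RAN's fast variable actually equilibrates to $\hat{G}_\wb^{-1}\nabla \MSBE(\wb)/2$ with $\hat{G}_\wb=\Exp_t[\nabla\delta_t\nabla\delta_t^T]$ (the matrix appearing in \eqref{eq:exp ran dir}, which the paper uses in its formal statement) rather than the exact Gauss--Newton matrix $G_\wb$ of \eqref{eq:quad loss}; since both matrices transform as $J^T(\cdot)J$, your equivariance argument goes through unchanged.
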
 
The formal version of Proposition~\ref{prop:invariance} and its proof are given in Appendix~\ref{app:invariance}.

We evaluated the performance of RAN in a simple benchmark environment. Consider an environment with $n$ states and one action, in which each state $i=1,2\ldots,n$ transits to state $\min(i+1,n)$ with probability $1-\epsilon$, and transits to a terminal state with probability $\epsilon$, for some $\epsilon\in[0,1)$. 
This is a generalization of the Hallway environment \citer{(Baird, 1995)}\cite{Bair95}, and is known to be a challenging task for the RG algorithm \citer{(Baird, 1995)}\cite{Bair95}. 
We tested Algorithm~\ref{alg:ran} in this environment with $n=50$, $\epsilon=0.01$, and $\gamma=0.99$  in the tabular setting (see Appendix~\ref{app:exp Baird6} for the details of this experiment).
The learning curves are depicted in Fig.~\ref{fig:Baird50}.
We observe that, in this experiment, Algorithm~\ref{alg:ran}  is about 30 times faster than RG, and reaches a convergence rate close to TD(0) \citer{(Sutton, 1988; Sutton \& Barto, 2018)}\cite{sutton1988TD,SuttB18}.

\begin{figure}[t!]
	\centering 
	\includegraphics[width=.8\linewidth]{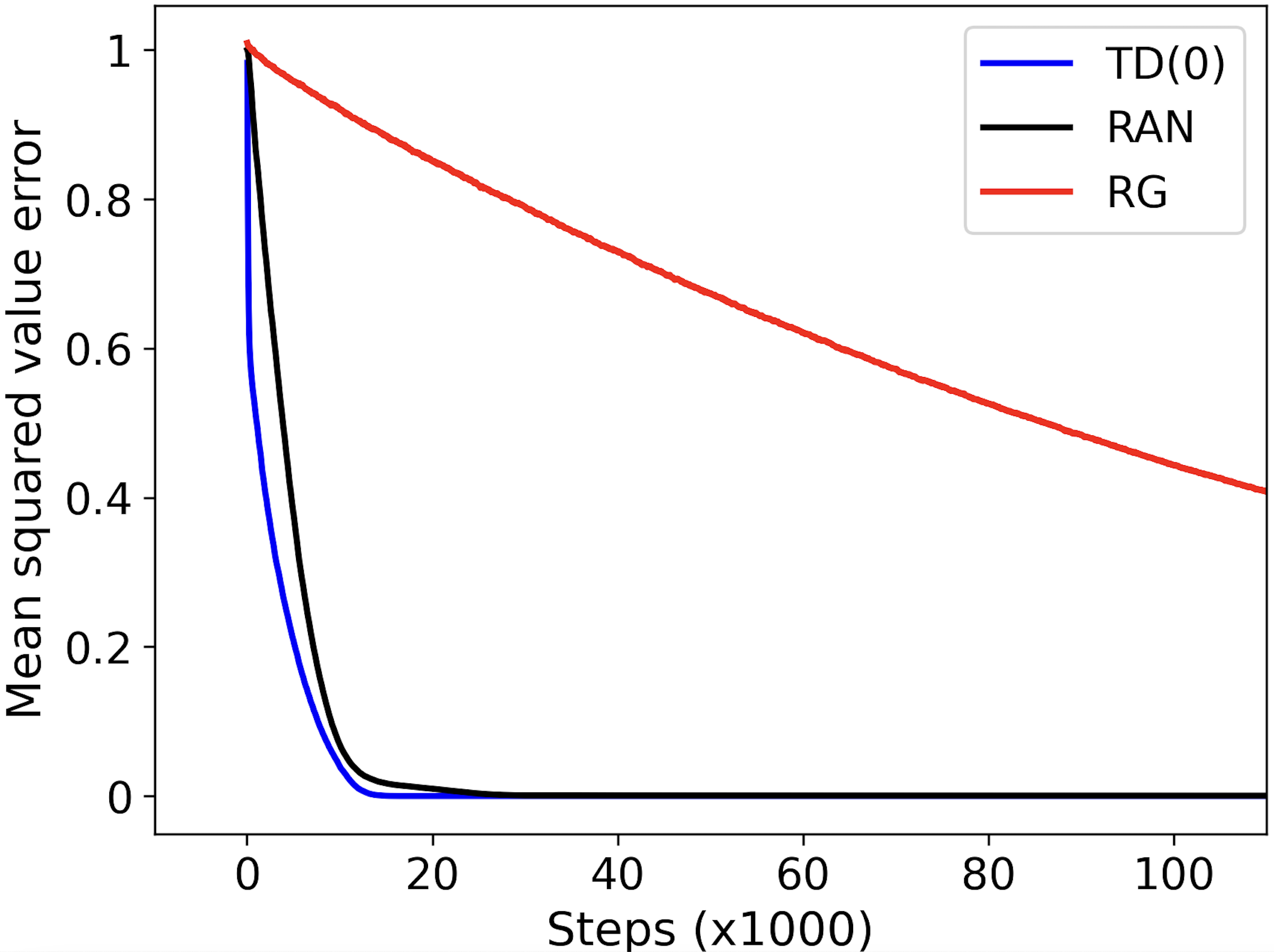}
	\caption{The Hallway experiment  discussed in Section~\ref{sec:alg1}.}\label{fig:Baird50}
\end{figure}

\section{ Double-sampling-free RAN algorithm}\label{sec:GTD}
In Algorithm~\ref{alg:ran}, we require double sampling to compute $\delta'_t$. In this section, we propose a Double-Sampling-Free version of RAN, called DSF-RAN. Double sampling is easily doable in deterministic environments \citer{(Saleh \& Jiang, 2019; Zhang et al., 2020)}\cite{saleh2019deterministic,ZhanBW2020}, in which case $\delta'_t$ can be computed using an independent sample $A'_{t+1}$ from the policy. However, for double sampling in stochastic environments, we require a model to get an independent sample $S'_{t+1}$ of the next state, which is typically possible only in simulated environments. 

To resolve the double sampling issue of RAN in stochastic environments,  
we use the  technique  discussed in Section~\ref{sec:background}, which was also used  in the GTD2 algorithm. 
More specifically, instead of $\delta'_t$ in Algorithm~\ref{alg:ran}, we use a parametric approximation $\hat\delta_\thetab(S_t,A_t)$ of $\delta_\wb(S_t,A_t)$, parameterized by $\thetab$. Similar to  GTD2 (see \eqref{eq:gtd2}), we then learn $\thetab$ through SGD on $\Exp_{s,a}\big[(\hat\delta_\thetab(s,a)-\delta_\wb(s,a))^2\big]$.
Pseudo code of DSF-RAN is  given in Algorithm~\ref{alg:dsf-rang}.

\begin{algorithm}[tb]
	\caption{\quad  DSF-RAN}
	\label{alg:dsf-rang}
	\begin{algorithmic}
		\STATE {\bfseries Parameters:} step-sizes $\alpha,\beta,\eta$, and decay parameter $\lambda$
		\STATE {\bfseries Initialize:} $\mb=\mathbf{0}$, $\wb$, $\thetab$.
		\FOR{$t=1,2,\ldots$}
		\STATE  $ \nabla \delta_t=\gamma\nabla_\wb q_\wb(S_{t+1},A_{t+1})-\nabla_\wb q_\wb(S_t,A_t)$
		\STATE $\mb \leftarrow  \lambda  \mb+ \beta \,\hat\delta_\thetab(S_t,A_t)\, \nabla \delta_t$
		\STATE $\mb \leftarrow   \mb-\beta (\mb^T\nabla\delta_t) \nabla \delta_t$
		\STATE $\wb\leftarrow \wb-\alpha\mb$
		\STATE $\thetab\leftarrow  \thetab+ \eta\big(\delta_t - \hat\delta_\thetab(S_t,A_t)\big)\nabla_\thetab \hat\delta_\thetab(S_t,A_t)$
		\ENDFOR
	\end{algorithmic}
\end{algorithm}

 We tested RAN and DNS-RAN algorithms on  Baird's Star environment \citer{(Baird, 1995)}\cite{Bair95}, that is  a Markov chain with six states, each represented by seven features  (see Appendix~\ref{app:exp Baird star} for details of this experiment).
The results are illustrated in Fig.~\ref{fig:Baird star}.
We observe that in this environment, RAN and DNS-RAN converge about 200 times faster than RG and GTD2 algorithms, respectively. 
It is well-known that off-policy TD(0) is unstable in this environment  \citer{(Baird, 1995)}\cite{Bair95}.
 
 \begin{figure}[t!]
 	\centering 
 	\includegraphics[width=.8\linewidth]{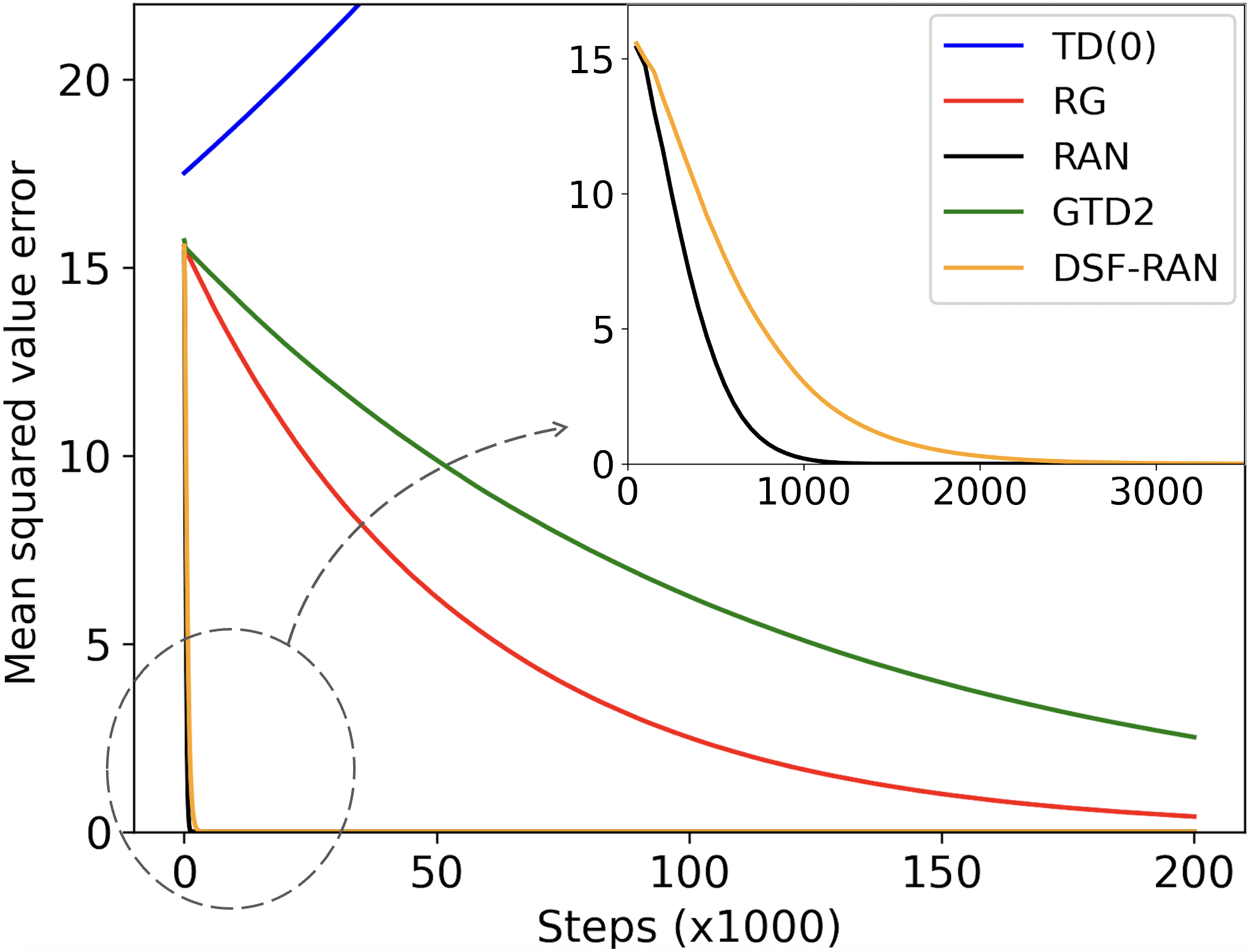}
 	\caption{The Baird's star experiment  discussed in Section~\ref{sec:GTD}}\label{fig:Baird star}
 \end{figure}

\section{The problem of outliers} \label{sec:outlier1} 
In this section we argue that the gradient of MSBE involves large outliers and discuss its impact on the RAN algorithm.
For simplicity, temporarily suppose that the set of actions  is a singleton, $\mathcal{A}=\{a\}$.
In the function approximation case, successive states $S_t$ and $S_{t+1}$ often have similar representations. 
As a result, $\nabla q_\wb(S_t,a)$ and $\gamma \nabla q_\wb(S_{t+1},a)$ are often similar, rendering  $\nabla \delta_t = \gamma \nabla q_\wb(S_{t+1},a) -  \nabla q_\wb(S_t,a)$ to be small \citer{(Zhang et al., 2020)}\cite{ZhanBW2020}. 
This would not have been problematic if $\nabla \delta_t$ was small for all $t$, in which case we could compensate by increasing the step-size. 
However,   $\nabla \delta_t$ can  occasionally be large, for example when $S_{t+1}$ is a terminal state in which case $\nabla \delta_t = - \gamma \nabla q_\wb(S_t,a)$,  or when   $S_{t+1}$ is far from $S_{t}$ (e.g., in large jump transitions). 
Although these \emph{outliers} occur with low probability, they carry important information. For example, the pre-terminal transitions are important because they pin down the estimated values to the terminal values. 
In environments with larger  action sets, if the policy has small entropy,  $A_{t+1}$ and $A_t$ would have similar representations with high probability, causing  $ \nabla \delta_t$ to be small. 

We now discuss how these outliers affect RAN.
The updates of $\mb$ in Algorithm~\ref{alg:ran} involve a
 momentum (of MSBE gradient) term $\lambda\mb +\delta'_t\nabla\delta_t$  and a correction term $-\beta (\nabla\delta_t^T\mb)\nabla\delta_t$ that aims to slowly modify $\mb$ towards the approximate Gauss-Newton direction. 
However, when $\nabla \delta_t$ is an  outlier, $\beta (\nabla\delta_t^T\mb)\nabla\delta_t$ can grow very large, cause an overshoot, and completely change the direction of $\mb$.
In particular, if $\beta \|\nabla \delta_t\|^2>1$, then magnitude of the correction term would be larger than the projection of $\mb$ on $\nabla \delta_t$, i.e.
\begin{equation} \label{eq:overshoot}
\big| \langle \beta (\nabla\delta_t^T\mb)\nabla\delta_t, \nabla\delta_t \rangle     \big|   >  \big| \langle\nabla\delta_t^T \mb\rangle\big|,
\end{equation}
which results in an overshoot along $\nabla \delta_t$. 
Such overshoots hinder $\mb$ from tracking the approximate Gauss-Netwon direction.

To reduce the adverse effect of outliers, one can reduce step-size $\beta$, at the cost of  slowed down learning.
Another popular solution is gradient clipping \citer{(Zhang et al., 2019)}\cite{zhang2019gradient}. 
However, as discussed in the first paragraph of this section, the
outliers  in our problem carry important information, which can be lost via gradient clipping.

\section{Outlier-splitting}\label{sec:outlier2}
We now propose \emph{outlier-splitting} as a general meta-technique for stochastic optimization, appropriate for the case that data contains rare sample functions with abnormally large gradients, and these sample functions carry important information that would be lost in gradient clipping.
We first explain the key idea by an example. Consider minimizing $f_1+\cdots+f_n$  for smooth functions $f_1,\ldots,f_n$. Suppose that $f_1$ is an outlier in the sense that the norm of its gradient is locally $k$ times larger than the gradient norms of  other functions, for some integer $k>1$. The idea is that  instead of applying SGD on $f_1+\cdots+f_n$, we break down $f_1$ into $k$ copies of  $f_1/k$ and apply SGD on $f_1/k+\cdots+f_1/k+f_2+\cdots+f_n$ in a random order. The latter updates are outlier-free while being equivalent to the former updates in expectation. We now proceed to a formal description.

Consider SGD on an objective function $F=\Exp[f]$.  
For any sample sample function $f$ and any point $\wb$, we consider a non-negative measure $\xi(f,\wb)$
; e.g., $\xi(f,\wb)=\|\nabla f(\wb)\|$ or $\|\nabla f(\wb)\|^2$.
Let $\bar{\xi}$ be a trace of $\xi$, updated by $\bar{\xi}\leftarrow \lambda_\xi\bar{\xi} + (1-\lambda_\xi)\xi(f_t,\wb_t)$, where $\lambda_\xi\in (0,1)$ is a constant close to $1$.
We say that $f_t$ is an outlier if $\xi(f_t,\wb_t)\ge \rho \bar{\xi}_t$, for some \emph{outlier threshold} $\rho>1$.

The pseudo code of the outlier-splitting method for online SGD is given in Algorithm~\ref{alg:outlier-splitting}. 
At time $t$ of this algorithm, we let 
\begin{equation}\label{eq:k}
k=\left\lfloor \frac{\xi(f_t,\wb_t)}{\rho \bar{\xi}_t} \right\rfloor+1. 
\end{equation}
If $f_t$ is an outlier (equivalently $k>1$),
instead of $f_t$ we pretend to have $k$ copies of $f_t/k$. 
We use one of these copies to do a gradient update at time $t$, and store the remaining $k-1$ copies in a buffer to use them for future updates. 
These copies are stored in one cell of an \emph{outlier-buffer} as a tuple $(f_t,k,k-1)$, where  $k-1$ indicates the number of remaining copies to be used for future updates.
In each iteration we perform one update based on the online sample, and perform at most one update based on a sample from the buffer.
More concretely, in each iteration $t$, after applying a gradient update  $\wb\leftarrow \wb-(\beta/k)\nabla f_t(\wb)$, we take a sample $(f,k_f,j)$ from the outlier buffer with some positive probability, and perform a gradient update  $\wb\leftarrow \wb-(\beta/k_f)\nabla f(\wb)$.

\begin{algorithm}[tb]
	\caption{\quad Outlier-splitting for online SGD, applied to loss function $F=\Exp[f]$}
	\label{alg:outlier-splitting}
	\begin{algorithmic}
		\STATE {\bfseries Parameters:} step-size $\beta$, outlier threshold $\rho$, trace parameter $\lambda_\xi$, outlier sampling probability $\sigma$.
		\STATE {\bfseries Initialize:} $\hat{\xi}=0$, $\wb$.
		\FOR{$t=1,2,\ldots$}
		\STATE $\hat{\xi}\leftarrow  \lambda_\xi\hat{\xi}+(1-\lambda_\xi)\xi(f_t,\wb)$
		\STATE $\bar\xi =  \hat{\xi} / (1-\lambda_\xi^t)$     \hspace{0cm} \algcomment{\scriptsize bias-corrected trace estimate}
		\STATE $k=\lfloor \xi(f_t,\wb)/(\rho\bar{\xi}) \rfloor +1$
		\STATE $\wb\leftarrow \wb-(\beta/k)\nabla f_t(\wb)$
		\IF{$k>1$}
		\STATE Store  $(f,k,k-1)$ in the outlier buffer
		\ENDIF
		\STATE {\bfseries With probability}  $\min(1, \sigma*\mbox{length of outlier bufffer})${\bfseries:}
		\bindent
		\STATE Sample $(f,k',j)$ uniformly form outlier buffer
		\STATE  $k''=\max\big(k', \lfloor \xi(f,\wb)/(\rho\bar{\xi}) \rfloor +1\big)$
		\STATE $\wb\leftarrow \wb-(\beta/k'')\nabla f(\wb)$
		\IF{$j>1$}
		\bindentb
		\STATE Replace $(f,k',j)$ with $(f,k',j-1)$ in the  buffer
		\eindent
		\ELSIF{$j=1$}
		\bindentb
		\STATE Remove $(f,k',j)$ from the outlier buffer
		\eindent
		\ENDIF
		\eindent
		\ENDFOR
	\end{algorithmic}
\end{algorithm}

We now show that the outlier buffer is  stable.  
The expected number of copies, $k-1$,  added to the buffer at time $t$ satisfies
\begin{equation*}
   \Exp[k-1]\le\Exp_t\left[  \frac{\xi(f_t,\wb_t)}{\rho \bar{\xi}_t} \right] \simeq  \frac{\Exp_t\left[  \xi(f_t,\wb_t) \right]}{\rho \Exp[\bar{\xi}_t]}=\frac1{\rho}<1,
\end{equation*}
where the inequality is due to \eqref{eq:k} and the approximate equality is because $\bar{\xi}_t$ is a long-time average.
On the other hand, as the length of the outlier buffer increases, the probability of performing a sample update from the buffer goes to $1$. In this case, arrival rate to the buffer, $1/\rho$, is smaller than its departure rate, $1$; implying 
 stability of the outlier buffer. 

\section{Our main algorithm: RANS} \label{sec:rans}
\todonext{The experiments are using a "before" version of RANS. I also ran experiments with the "after" versions. The after version were better in Acrobot and were worse in Cartpole. The difference was small though. Decided to keep the before version (because better curves and simpler formula). But for more complex environments, I have to try both.)}
Our final algorithm, RAN with outlier Splitting (RANS), is a combination of RAN, outlier-splitting, and adaptive step-size ideas.
In order to improve updates of $\mb$, we employ  an adaptive vector step-size  $\betab$ that evolves according to a mechanism quite similar to RMSProp \citer{(Kochenderfer \& Wheeler, 2019)}\cite{kochenderfer2019optBook}, as we discuss next.

Consider  a trace vector $\nub_t$ of $(\nabla\delta_t)^2$ updated according to 
\begin{equation*}
\nub_t\leftarrow \lambda'\nub_{t-1}+(1-\lambda')(\nabla\delta_t)^2,
\end{equation*}
where $(\nabla\delta_t)^2$ is the entrywise square vector of $\nabla\delta_t$, and $\lambda'\in[0,1)$ is a constant. We consider an outlier-measure 
\begin{equation}\label{eq:xi}
\xi_t = \langle\frac1{\sqrt{\nub_t}}\odot \nabla\delta_t, \nabla\delta_t\rangle
\end{equation}
where $1/{\sqrt{\nub_t}}$  is entrywise square root, and $\odot$ and $\langle\cdot,\cdot\rangle$ denote entrywise product and inner product of two vectors, respectively.
We then compute the trace $\bar{\xi}$ and $k$ as in Section~\ref{sec:outlier2}:
$\bar{\xi}_t\leftarrow \lambda'\bar{\xi}_t+(1-\lambda')\xi_t$ and $k=\left\lfloor {\xi_t}/{(\rho \bar{\xi}_t)} \right\rfloor+1$.
We finally fix an $\eta\in(0,1)$ and choose the step-size 
\begin{equation}\label{eq:betab}
\betab_t=\frac{\eta}{\rho \bar{\xi}_t} \, \frac1{\sqrt{\nub_t}}.
\end{equation}

The pseudo code of RANS is given in Algorithm~\ref{alg:rangs}  in Appendix~\ref{app:rans}. 
The algorithm involves applying the outlier-splitting method on the updates of $\mb$ in RAN, and using the adaptive step-size in  \eqref{eq:betab}. 

We now shows that the outlier-splitting mechanism in RANS effectively prevents overshoots of type \eqref{eq:overshoot} in the updates of $\mb$. 
Given the above choice of $\betab_t$, we have
\begin{equation*}
\begin{split}
\frac1k  \langle\betab_t\odot &\nabla\delta_t, \nabla\delta_t\rangle = \frac1k\,\frac{\eta}{\rho \bar{\xi}_t}  \langle\frac1{\sqrt{\nub_t}}\odot \nabla\delta_t, \nabla\delta_t\rangle \\
&\le \frac{\rho \bar{\xi}_t}{\xi_t}\,\frac{\eta}{\rho \bar{\xi}_t} \, \langle\frac1{\sqrt{\nub_t}}\odot \nabla\delta_t, \nabla\delta_t\rangle 
=\eta,
\end{split}
\end{equation*}
where the first equality is from the definition of $\betab_t$ in \eqref{eq:betab}, the inequality is due to the definition of $k$, and the last equality follows from the definition of $\xi_t$ in \eqref{eq:xi}.  
This implies that
\begin{equation}\label{eq:no overshoot RANS}
\big| \langle \frac{1}k \betab (\nabla\delta_t^T\mb)\nabla\delta_t, \nabla\delta_t \rangle     \big|   \le \eta \big|\nabla\delta_t^T \mb\big|.
\end{equation}
Therefore  overshoots of type \eqref{eq:overshoot} do not occur in RANS.

The RANS algorithm has hyperparameters $\alpha$, $\eta$, $\rho$, $\lambda$, $\lambda'$, and $\sigma$ (the outlier sampling probability). 
Setting  $\eta=0.2$ and $\rho=1.2$ are always good choices. 
Furthermore, our experiments show that the parameters $\lambda$,  $\lambda'$, and $\sigma$ can be set to the default values  $\lambda=0.999$,  $\lambda'=0.9999$, and $\sigma=0.02$ without much performance degradation. 
In this case, the RANS algorithm would  have essentially one hyper-parameter $\alpha$, just like  RG and TD algorithms with Adam optimizer~\citer{(Kingma \& Ba, 2014)}\cite{kingma2014adam}.  
The per-iteration computational complexity of RANS  is at most twice the  RG algorithm with Adam optimizer.

\section{Experiments} \label{sec:experiments}
Similar to TD, the RANS algorithm can be utilized within any control loop. More specifically, one can used RAN instead of TD to estimate $Q$-values and plug these estimates into the policy update of interest, including actor-critic algorithms like A3C \cite{mnih2016A3C}\citer{(Minh et al., 2016)}, deterministic policy gradient algorithms  \cite{silver2014DPG}\citer{(Silver et al., 2014)} like DDPG \cite{Lillicrap2015DDPG}\citer{(Lillicrap et al., 2015)}, and greedy/soft-max policy updates like DQN \cite{mnih2015DQN}\citer{(Minh et al., 2015)}. 
In this section, we assess the performance of softmax policy updates  and deterministic policy gradient methods when the $Q$-functions in these algorithms are calculated using RANS.  

For softmax policy updates, we conducted experiments on Acrobot and Cartpole environments. We used a single-layer neural network with 64 hidden units with ReLU activation to learn the action-values via three algorithms, TD(0), RG, and RANS. The actions are chosen according to a softmax distribution on the action-values.
Fig.~\ref{fig:control} illustrates expected returns versus number of step.
We trained TD(0) and RG using Adam optimizer. 
Refer to Appendix~\ref{app:exp control} for complementary experimental results and details of the experiments.
The results show that the RANS algorithm outperforms RG and TD on these environments.

\begin{figure*}[t!]
	\centering 
	\begin{subfigure}[t]{0.4\linewidth}
		\centering
		\includegraphics[width=1\linewidth]{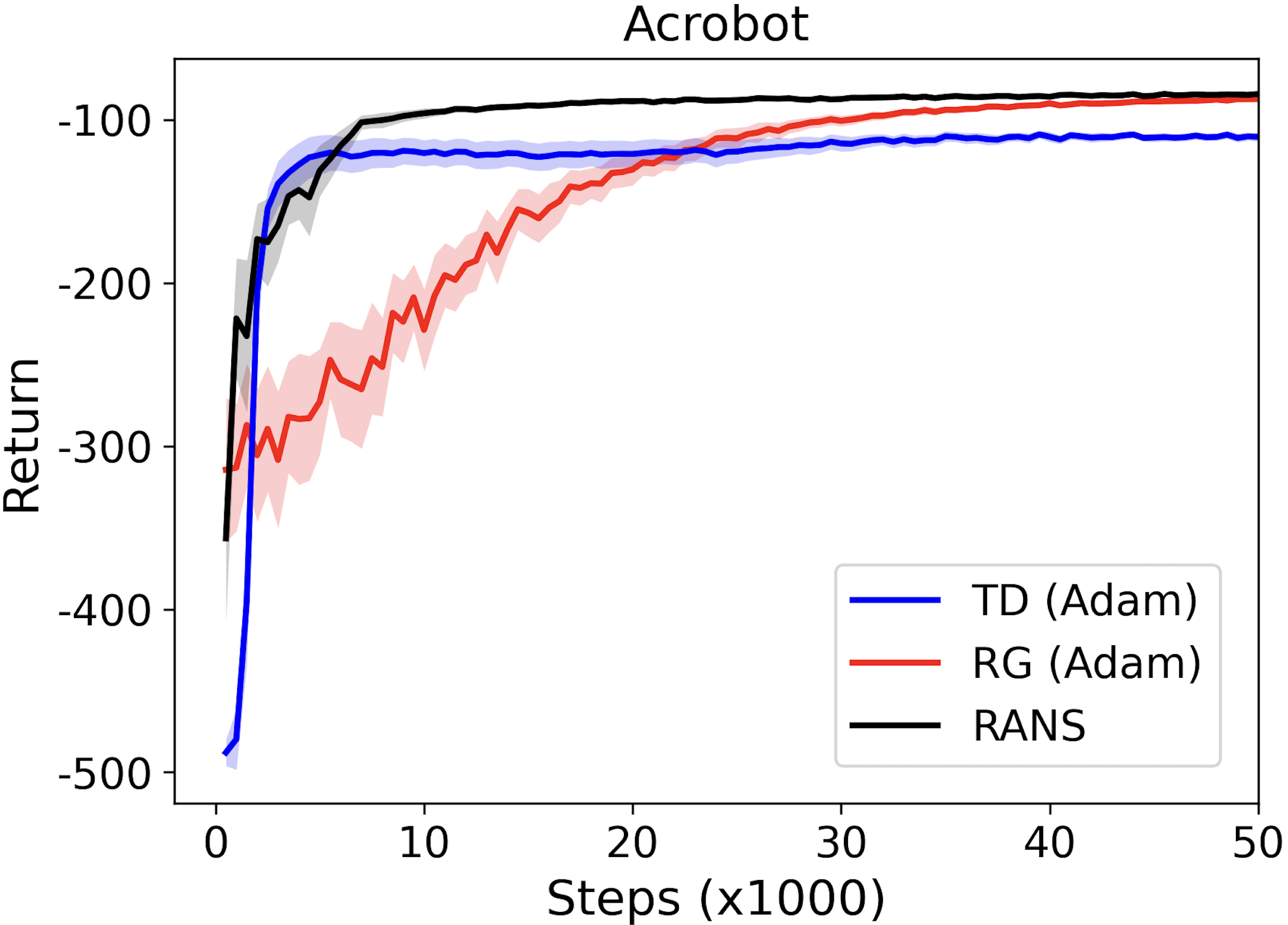}
	\end{subfigure}%
	\qquad\quad
	\vspace{.3cm}
	\begin{subfigure}[t]{.39\linewidth}
		\centering
		\includegraphics[width=1\linewidth]{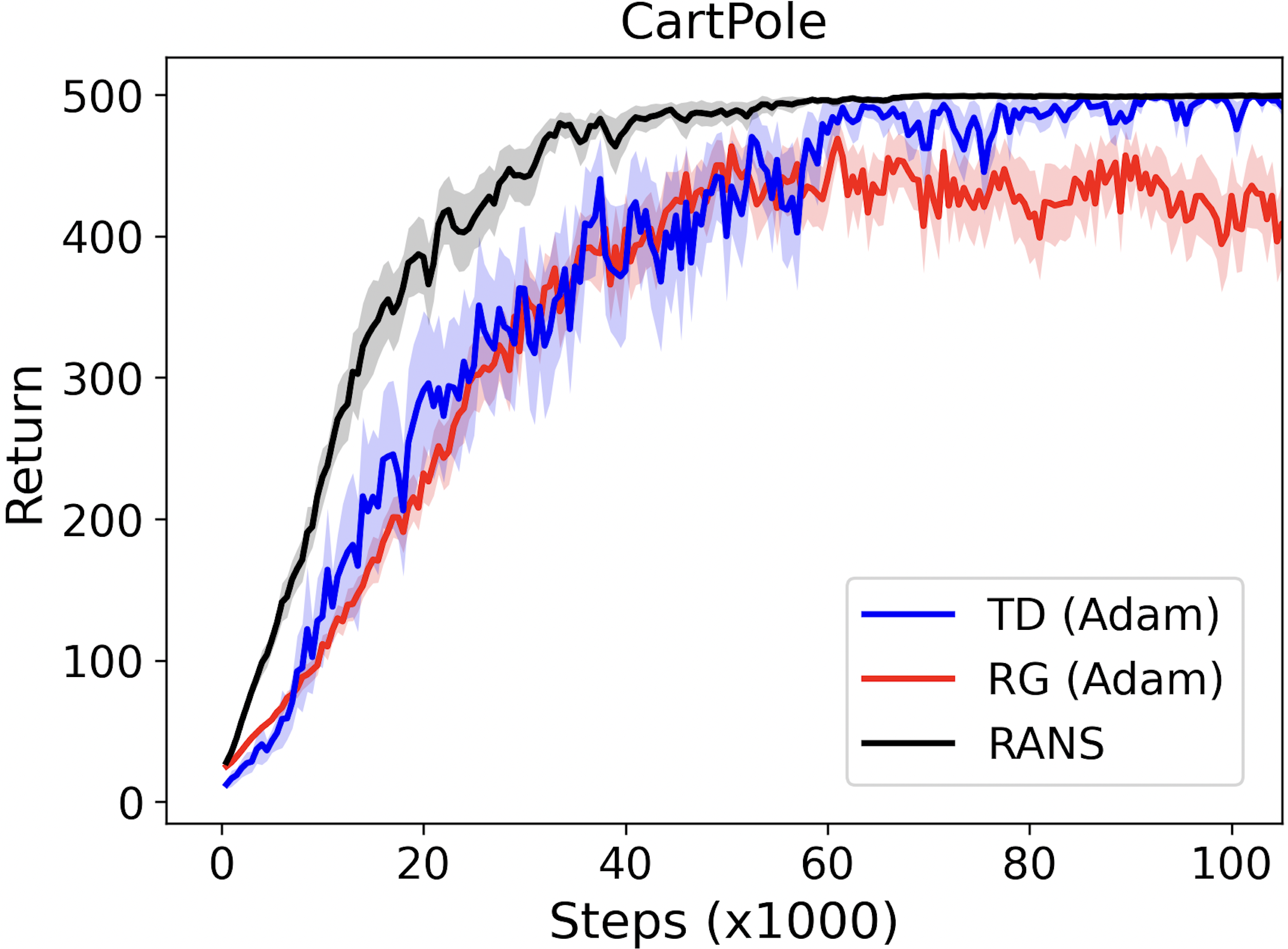}
	\end{subfigure}
	\caption{Performance of RANS, TD(0), and RG on classic control tasks. A single-layer neural network with 64 hidden ReLU units  was used to learn the $Q$-values, and a softmax distribution on the $Q$-values was used as the policy. }\label{fig:control}
\end{figure*}

For the deterministic policy gradient  actor updates, we conducted experiments on MuJoCo environments: Hopper and HalfCheetah.
	We employed a two-layer feedforward neural network with 400 and 300 hidden ReLU units respectively for both the actor and critic, and a final tanh unit following the output of the actor. The actor was trained using deterministic policy gradient policy updates \cite{silver2014DPG,Lillicrap2015DDPG}\citer{(Silver et al., 2014; Lillicrap et al., 2015)}, while the critics were trained by three algorithms: RANS, Adam TD with delayed target network update, and Adam RG. We considered an on-policy setting where samples are drawn from the current policy in an online manner and are directly fed into the actor and critic training algorithms. We did not use replay buffers or batch updates.
Fig.~\ref{fig:MuJoCo} depicts the learning curves of these algorithms. Refer to Appendix~\ref{app:exp control} for additional experimental results and details of the experiments. The results indicate that the RANS algorithm surpasses TD and RG in these environments. It is important to note that the results cannot be fairly compared to the state-of-the-art because our setting is on-policy and does not take advantage of replay buffers and batch updates. We leave the integration of these techniques into the RANS ideas for future work.

\begin{figure*}[t!]
	\centering 
	\begin{subfigure}[t]{0.39\linewidth}
		\centering
		\includegraphics[width=1\linewidth]{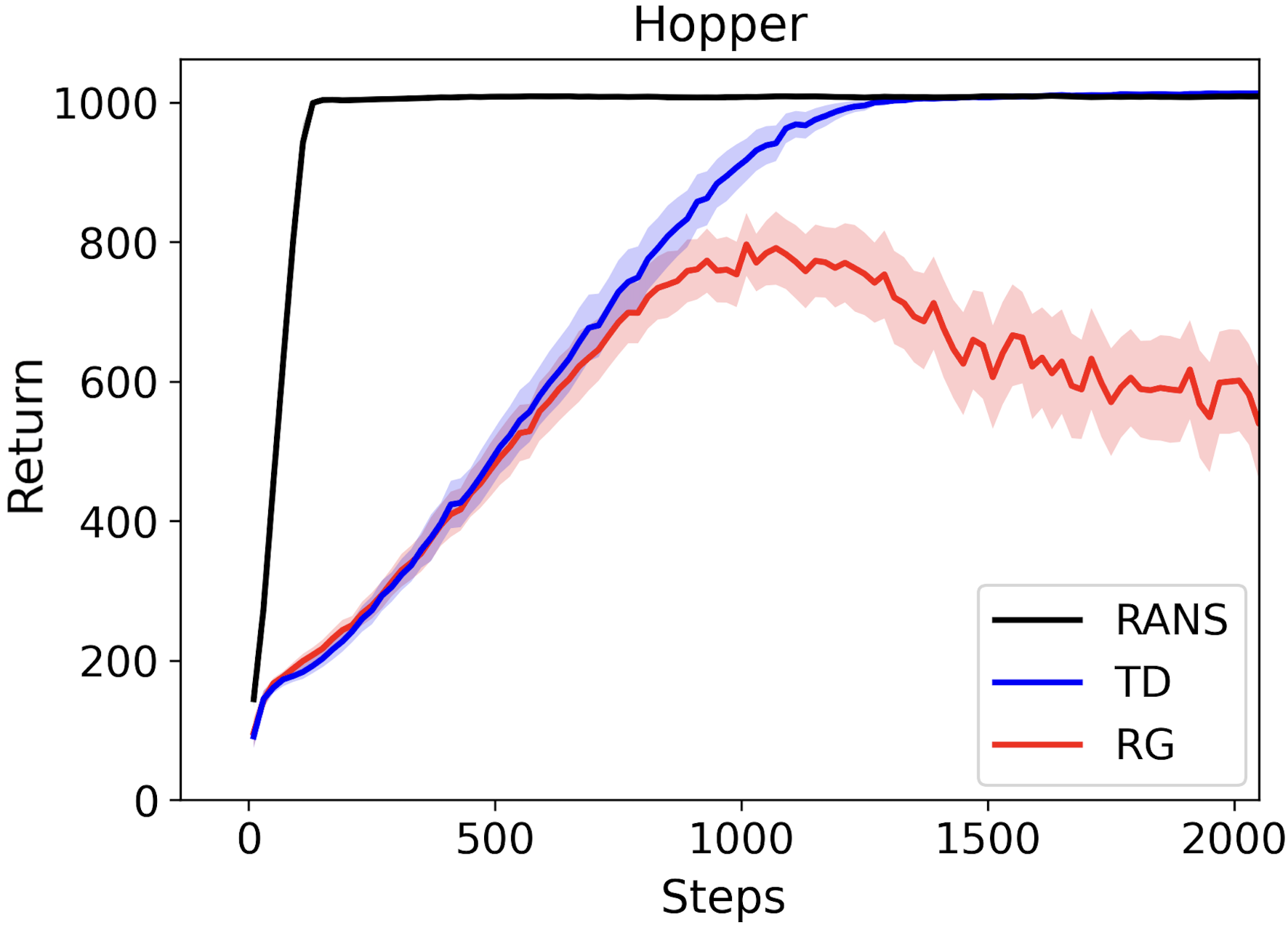}
	\end{subfigure}%
	\qquad\quad
	\begin{subfigure}[t]{.39\linewidth}
		\centering
		\includegraphics[width=1\linewidth]{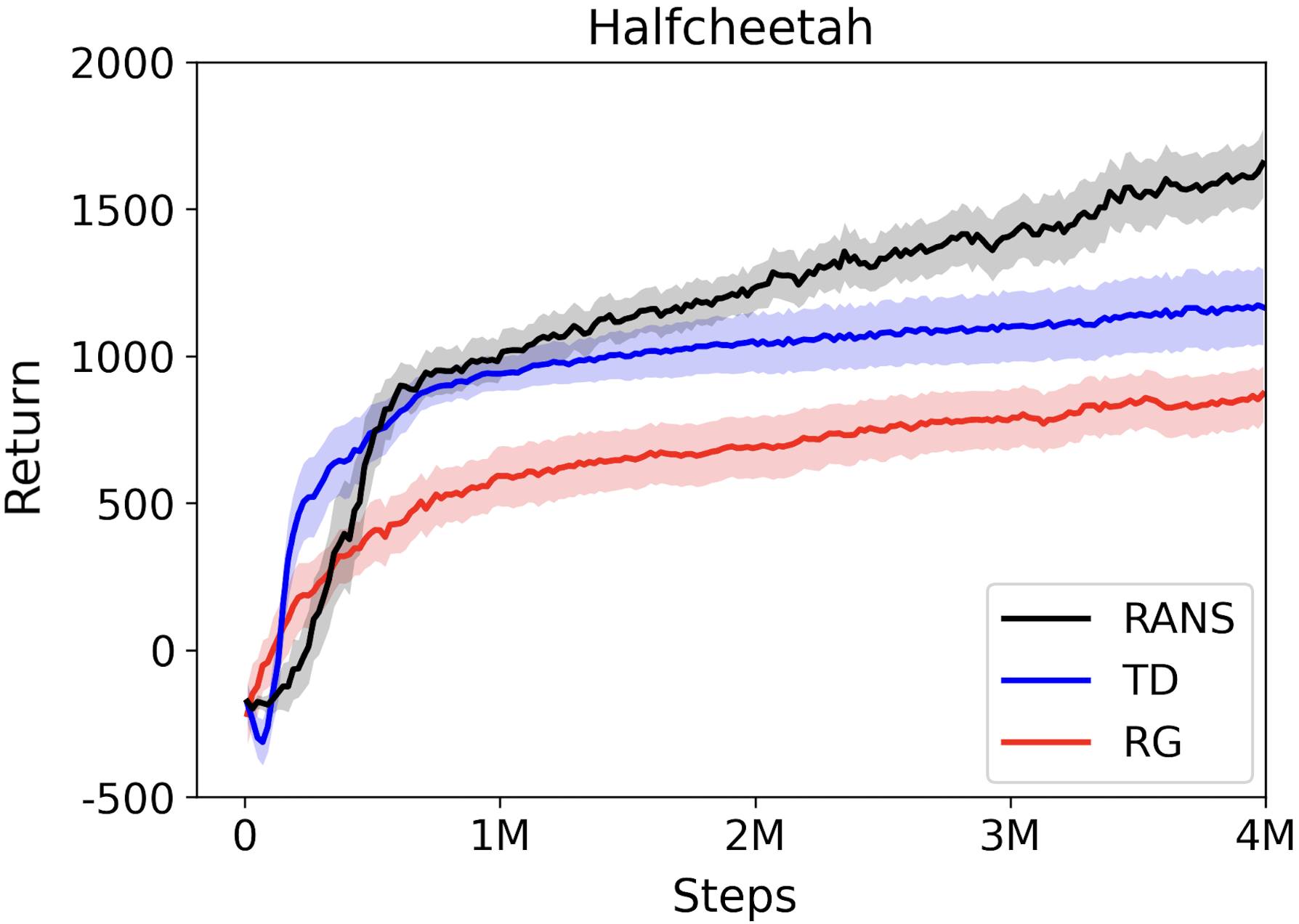}
	\end{subfigure}
	\caption{Performance of RANS, TD with target networks, and RG algorithms on simple MuJoCo environments. The $Q$-values, trained via these algorithms, were integrated into a standard deterministic policy gradient control loop for policy training. The experiments are online, not utilizing replay buffers or batch updates.}\label{fig:MuJoCo}
\end{figure*}

\section{Related works}
Poor conditioning of MSBE was previously observed in \citer{(Wang \& Ueda, 2021)}\cite{WangU21} through study of an example Markov chains. 
More specifically, \citer{Wang and Ueda (2021)}\cite{WangU21} analyzed a particular $n$-state Markov chain and showed that the condition-number of MSBE in this Markov chain scales with $n^2$.  
They also showed that the condition-number scales with $1/(1-\gamma)^2$ in another example Markov chain. In comparison, our lower bound in Theorem~\ref{th:cond}~(a) holds for every Markov chain, and the lower bound in Theorem~\ref{th:cond}~(b) scales with $n^2/(1-\gamma)^2$.

A prevalent explanation for slowness of gradient-based value estimation methods is the so called information flow in the wrong direction  \citer{(Baird, 1995)}\cite{Bair95}. More concretely, each update in RG can be decomposed into a forward bootstrapping component (or a TD update) and a backward bootstrapping component  (the so called wrong direction of information flow). 
A common approach for accelerating the gradient updates is by suppressing the second component (e.g., via some sort of combination with TD updates), especially in early  stages of training.
The acceleration gained in the residual algorithm \citer{(Baird, 1995)}\cite{Bair95}, TDC \citer{(Sutton et al., 2009)}\cite{SuttMPBSSW09}, TDRC, and QRC \citer{(Ghiassian et al., 2020)}\cite{Ghiassian2020TDRC} 
can be understood from this perspective. 
In contrast, acceleration gained in our algorithms does not rely on combinations with TD updates.

Use of Gauss-Newton method for value estimation was explicitly proposed in \citer{(Gottwald et al., 2021; Gottwald \& Shen, 2022)}\cite{GottGSD21,GottS22}, recently.
Value estimation algorithms based on Kalman filter \citer{(Choi \& Van Roy, 2006; Geist \& Pietquin, 2010)}\cite{choi2006kalman,geist2010kalman} are also known to have an equivalent form to online Gauss-Newton updates \citer{(Geist \& Pietquin, 2010)}\cite{geist2010kalman}.
\citer{Sun and Bagnell (2015)}\cite{sun2015newton} studied  MSBE minimization with Newton method.
However, all of the above methods involve approximating a variant of the Hessian or Gauss-Newton matrices and solving a system of linear equations in each iteration, which is  computationally costly.

\citer{Yao et al. (2009)}\cite{yao2009} proposed a low complexity two time-scale method, called LMS-2, for stochastic linear regression. Our RAN algorithm can be perceived as a generalization of the LMS-2 algorithm to MSBE minimization under non-linear function approximation. 
Several other algorithms including least squares TD \citer{(Sutton \& Barto, 2018)}\cite{SuttB18} and \citer{(Devraj \& Meyn, 2017)}\cite{devraj2017zap} also leverage matrix gain for improved convergence, under linear function approximation.

In the same spirit, natural gradient methods \citer{(Amari, 1998; Kakade, 2001; Martens, 2020)}\cite{amari1998natural,kakade2001natural,martens2020natural} also enjoy robustness to parameterization.   \citer{Dabney and Thomas (2014), Knight and  Lerner (2018), and Achiam et al. (2019)}\cite{DabnT14,knight2018natural,achiam2019} proposed natural gradients algorithms for value estimation. 
\citer{Dabney and Thomas (2014)}\cite{DabnT14} also proposed a  low complexity two time scale implementation  that has high-level algorithmic similarities to the RAN algorithm.

In Section~\ref{sec:alg1} and Appendix~\ref{app: proximal view of RAN} we showed that the RAN algorithm can be perceived as a proximal method.
A proximal method for value estimation, called GTD2-MP, was proposed in \citer{(Liu et al., 2020; Mahadevan et al., 2014)}\cite{LiuGMP20,mahadevan2014proximal}.
However, these works consider a Bregman divergence that does not depend on the value estimates.
In fact, as the step-size goes to zero,  update direction of GTD2-MP tends to  the expected GTD2 update direction. 
\citer{Schulman et al. (2015), Sun and  Bagnell (2015), and Zhu and Murray (2022)}\cite{schulman2015GAE,sun2015newton,zhu2022gradient} considered proximal methods with value dependent penalties  of the form $E[(v_{\wb_{t+1}}(S_t)-v_{\wb_t}(S_t))^2]$.  
Although the resulting expected update direction $\Exp_s[\nabla v_\wb(s) \nabla v_\wb(s)^T]^{-1}\nabla MSBE(\wb)$ is robust to parameterization, 
it is not robust against poor conditioning. 
For example, in the tabular case, this expected update direction simplifies to $\nabla MSBE(\wb)$, which is the same as RG. 
In contrast, in the proximal view of the RAN algorithm, we used penalties of type $E[(\delta_{\wb_{t+1}}(S_t,A_t)-\delta_{\wb_t}(S_t,A_t))^2]$, which provides robustness to the conditioning of MSBE, as discussed in Section~\ref{sec:alg1} and Appendix~\ref{app: proximal view of RAN}.

The control algorithm SBEED \cite{dai2018sbeed}\citer{(Dai et al., 2018)} involves mirror descent RG  along with  some  other ideas including  entropy regularization, akin to SAC \cite{haarnoja2018SAC}\citer{(Haarnoja et al., 2018)}, and blending RG with \emph{naive residual gradient} \cite{SuttB18}\citer{(Sutton and Barto, 2018, Chapter~11)}. The entropy regularization technique, in particular, is known to produce significant performance gains. It is noteworthy that the  entropy regularization technique can be used in conjunction with RANS and DFS-RAN algorithms, as well.

\citer{Karampatziakis and Langford (2010) and Tian and Sutton (2019)}\cite{KaraL10,TianS19} proposed a method, called \emph{sliding-step}, to reduce the adverse effect of outliers in certain problems. 
This method is pretty similar to the outlier-splitting algorithm, with the only difference that in the sliding-step method, all $k$ updates $\wb\leftarrow\wb-\nabla f_t(\wb)/k$ are applied sequentially and before time $t+1$, while the outlier-splitting method spreads these updates over a long time.
Another simple approach is using momentum to reduce the variance of updates. However, smoothing large outliers requires large momentum parameters, in which case the delayed effect of gradients propagate far into future and become out-dated, pushing $\wb$ along outdated outlier gradient even if the  outlier gradient at current $\wb$ is reversed.  

\section{Future works and discussion}
In this paper, we highlighted causes that underlie slowness of gradient-based value estimation methods, and proposed low complexity techniques to resolve them.
Our focus was on the on-policy case, however the proposed algorithms are easily applicable for off-policy learning when combined with standard importance sampling techniques.
We provided evidence for the potential of the proposed algorithms via experiments on a few classic environments.

Other than applying standard techniques (such as batch updates, replay buffers, different forms of step-size adaptation, etc.) and testing the algorithms on more complex environments, there are several directions for future research. This includes adopting the unbiased gradient estimate of \eqref{eq:quad loss} in \eqref{eq:2TS delta'_T} instead of the biased estimate  in \eqref{eq:2TS}, and  comparing these methods with other means of solving  \eqref{eq:quad loss}, including conjugate gradient and low rank approximation of the Gauss-Newton matrix.
Another important direction is further exploration of the proposed double-sampling-free methods in stochastic environments with neural network function approximation.
On the theory side, it would be interesting to study condition-number of MSBE, and in general the shape of MSBE landscape, under linear and non-linear function approximation under common feature representations in asymptotically large environments.
Moreover, non-asymptotic behavior and finite sample complexity analysis of the proposed methods would be very helpful for understanding the effectiveness of these algorithms in reducing sensitivity to condition-number.

\section{Acknowledgments}
The authors want to thank Yi wan, Sina Ghiassian, John N. Tsitsiklis, and Saber Salehkaleybar for their valuable feedback in various stages of development of this work.


\ifbibtex
\bibliography{RL}
\bibliographystyle{icml2023}
\else
\citer{
\section*{References}
	
\begin{list}{}{%
	\setlength{\topsep}{0pt}%
	\setlength{\leftmargin}{0.2in}%
	\setlength{\listparindent}{-0.2in}%
	\setlength{\itemindent}{-0.2in}%
	\setlength{\parsep}{\parskip}%
}%
\item[]
Achiam, J., Knight, E., and Abbeel, P. Towards characterizing divergence in deep Q-learning. \emph{arXiv preprint arXiv:1903.08894}, 2019.

Amari, S. I. Natural gradient works efficiently in learning. \emph{Neural Computation}, 10(2):251–276, 1998.

Baird, L. Residual algorithms: Reinforcement learning with function approximation. In \emph{Proceedings of the International Conference on Machine Learning}, pp. 30–37. 1995.

Barnard, E. Temporal-difference methods and Markov models. \emph{IEEE Transactions on Systems, Man, and Cybernetics}, 23(2):357–365, 1993.

Bertsekas, D. and Tsitsiklis, J. N. \emph{Neuro-dynamic programming}. Athena Scientific, 1996.

Bhatnagar, S., Sutton, R. S., Ghavamzadeh, M., and Lee, M. Natural actor-critic algorithms. \emph{Automatica}, 45, 2009.

Boyan, J. A. Technical update: Least-squares temporal difference learning. \emph{Machine Learning}, 49(2):233–246, 2002.

Brandfonbrener, D. and Bruna, J. Geometric insights into the convergence of nonlinear TD learning. \emph{arXiv preprint arXiv:1905.12185}, 2019.

Choi, D. and Van Roy, B. A generalized Kalman filter for fixed point approximation and efficient temporal-difference learning. \emph{Discrete Event Dynamic Systems}, 16(2):207–239, 2006.

Dabney, W. and Thomas, P. Natural temporal difference learning. In \emph{Proceedings of the AAAI Conference on Artificial Intelligence}, volume 28, 2014.

Dai, B., He, N., Pan, Y., Boots, B., and Song, L. Learning from conditional distributions via dual embeddings. In \emph{Proceedings of the International Conference on Artificial Intelligence and Statistics}, pp. 1458–1467, 2017.

Deb, R. and Bhatnagar, S. Gradient temporal difference with momentum: Stability and convergence. In \emph{Proceedings of the AAAI Conference on Artificial Intelligence}, volume 36, pp. 6488–6496, 2022.

Devraj, A. M. and Meyn, S. Zap Q-learning. In \emph{Advances in Neural Information Processing Systems}, volume 30, 2017.

Du, S., Chen, J., Li, L., Xiao, L., and Zhou, D. Stochastic variance reduction methods for policy evaluation. In \emph{Proceedings of the International Conference on Machine Learning}, pp. 1049-1058, 2017.

Geist, M. and Pietquin, O. Kalman temporal differences. \emph{Journal of artificial intelligence research}, 39:483–532, 2010.

Ghiassian, S. and Sutton, R. S. An empirical comparison of off-policy prediction learning algorithms in the four rooms environment. \emph{arXiv preprint arXiv:2109.05110}, 2021.

Ghiassian, S., Patterson, A., Garg, S., Gupta, D., White, A., and White, M. Gradient temporal-difference learning with regularized corrections. In \emph{Proceedings of the International Conference on Machine Learning}, pp. 3524–3534, 2020.

Gordon, G. J. \emph{Approximate solutions to Markov decision processes}. Ph.D. thesis, Carnegie Mellon University, 1999.

Gottwald, M. and Shen, H. On the compatibility of multistep lookahead and hessian approximation for neural residual gradient. In \emph{Proceedings of the Multi-disciplinary Conference on Reinforcement Learning and Decision Making}, 2022.

Gottwald, M., Gronauer, S., Shen, H., and Diepold, K. Analysis and optimization of Bellman residual errors with neural function approximation. \emph{arXiv preprint arXiv:2106.08774}, 2021.

Hackman, L. M. \emph{Faster Gradient-TD Algorithms}. M.Sc. thesis, University of Alberta, Edmonton, 2012.

Juditsky, A. and Nemirovski, A. \emph{Optimization for Machine Learning}. MIT Press, 2011.

Kakade, S. M. A natural policy gradient. In \emph{Advances in Neural Information Processing Systems}, volume 14, 2001.

Karampatziakis, N. and Langford, J. Online importance weight aware updates. \emph{arXiv preprint arXiv:1011.1576}, 2010.

Kingma, D. P. and Ba, J. Adam: A method for stochastic optimization. \emph{arXiv preprint arXiv:1412.6980}, 2014.

Knight, E. and Lerner, O. Natural gradient deep Q-learning. \emph{arXiv preprint arXiv:1803.07482}, 2018.

Kochenderfer, M. J. and Wheeler, T. A. \emph{Algorithms for optimization}. MIT Press, 2019.

Konda, V. and Tsitsiklis, J. Actor-critic algorithms. In \emph{Advances in Neural Information Processing Systems}, volume 12, 1999.

Kushner, H. and Yin, G. G. \emph{Stochastic approximation and recursive algorithms and applications}. Springer Science \& Business Media, 2003.

Lillicrap, T. P., Hunt, J. J., Pritzel, A., Heess, N., Erez, T., Tassa, Y., Silver, D., and Wierstra, D. Continuous control with deep reinforcement learning. \emph{arXiv preprint arXiv:1509.02971}, 2015.

Liu, B., Liu, J., Ghavamzadeh, M., Mahadevan, S., and Petrik, M. Finite-sample analysis of proximal gradient td algorithms. \emph{arXiv preprint arXiv:2006.14364}, 2020.

Macua, S. V., Chen, J., Zazo, S., and Sayed, A. H. Distributed policy evaluation under multiple behavior strategies. \emph{IEEE Transactions on Automatic Control}, 60(5): 1260–1274, 2014.

Maei, H. R. \emph{Gradient temporal-difference learning algorithms}. Ph.D. thesis, University of Alberta, Edmonton, 2011.

Maei, H. R., Szepesv{\'a}ri, C., Bhatnagar, S., and Sutton, R. S. Toward off-policy learning control with function approximation. In \emph{Proceedings of the 27th International Conference on Machine Learning}, pp. 719–726, 2010.

Mahadevan, S., Liu, B., Thomas, P., Dabney, W., Giguere, S., Jacek, N., Gemp, I., and Liu, J. Proximal reinforcement learning: A new theory of sequential decision making in primal-dual spaces. \emph{arXiv preprint arXiv:1405.6757}, 2014.

Marquardt, D. W. An algorithm for least-squares estimation of nonlinear parameters. \emph{Journal of the Society for Industrial and Applied Mathematics}, 11(2):431–441, 1963.

Martens, J. New insights and perspectives on the natural gradient method. \emph{Journal of Machine Learning Research}, 21(1):5776–5851, 2020.

Mnih, V., Kavukcuoglu, K., Silver, D., Rusu, A. A., Veness, J., Bellemare, M. G., Graves, A., Riedmiller, M., Fidjeland, A. K., Ostrovski, G., et al. Human-level control through deep reinforcement learning. \emph{Nature}, 518(7540): 529–533, 2015.

Mnih, V., Badia, A. P., Mirza, M., Graves, A., Lillicrap, T., Harley, T., Silver, D., and Kavukcuoglu, K. Asynchronous methods for deep reinforcement learning. In \emph{Proceedings of the International Conference on Machine Learning}, pp. 1928–1937, 2016.

Nesterov, Y. and Polyak, B. T. Cubic regularization of newton method and its global performance. \emph{Mathematical Programming}, 108(1):177–205, 2006.

Nocedal, J. and Wright, S. J. \emph{Numerical optimization}. Springer, 1999.

Polyak, B. T. Some methods of speeding up the convergence of iteration methods. \emph{USSR Computational Mathematics and Mathematical Physics}, 4(5):1–17, 1964.

Saleh, E. and Jiang, N. Deterministic bellman residual minimization. In \emph{Proceedings of Optimization Foundations for Reinforcement Learning Workshop at NeurIPS}, 2019.

Schoknecht, R. and Merke, A. TD(0) converges provably faster than the residual gradient algorithm. In \emph{Proceedings of International Conference on Machine Learning}, pp. 680–687, 2003.

Schulman, J., Moritz, P., Levine, S., Jordan, M., and Abbeel, P. High-dimensional continuous control using generalized advantage estimation. \emph{arXiv preprint arXiv:1506.02438}, 2015.

Sherman, J. and Morrison, W. J. Adjustment of an inverse matrix corresponding to a change in one element of a given matrix. \emph{Annals of Mathematical Statistics}, 21(1): 124–127, 1950.

Silver, D., Lever, G., Heess, N., Degris, T., Wierstra, D., and Riedmiller, M. Deterministic policy gradient algorithms. \emph{International Conference on Machine Learning}, pp. 387-395, 2014.

Strang, G. \emph{Linear algebra and its applications}. Thomson, Brooks/Cole, Belmont, CA, 2006.

Sun, W. and Bagnell, J. A. Online bellman residual algorithms with predictive error guarantees. 2015.

Sutton, R. S. Learning to predict by the methods of temporal differences. \emph{Machine Learning}, 3(1):9–44, 1988.

Sutton, R. S. and Barto, A. G. \emph{Reinforcement learning: An introduction}. MIT Press, 2018.

Sutton, R. S., Maei, H. R., Precup, D., Bhatnagar, S., Silver, D., Szepesv{\'a}ri, C., and Wiewiora, E. Fast gradient-descent methods for temporal-difference learning with linear function approximation. In \emph{Proceedings of the 26th International Conference on Machine Learning}, pp. 993–1000, 2009.


Tian, T. and Sutton, R. S. Extending sliding-step importance weighting from supervised learning to reinforcement learning. In \emph{Proceedings of the International Joint Conference on Artificial Intelligence}, pp. 67–82. Springer, 2019.

Tsitsiklis, J. and Van Roy, B. Analysis of temporal-difference learning with function approximation. In \emph{Advances in Neural Information Processing Systems}, volume 9, 1996.

Van Seijen, H., Van Hasselt, H., Whiteson, S., and Wiering, M. A theoretical and empirical analysis of expected Sarsa. In \emph{IEEE Symposium on Adaptive Dynamic Programming and Reinforcement Learning}, pp. 177–184, 2009.

Wang, Z. T. and Ueda, M. A convergent and efficient deep Q network algorithm. \emph{arXiv preprint arXiv:2106.15419}, 2021.

Watkins, C. J. and Dayan, P. Q-learning. \emph{Machine learning}, 8(3):279–292, 1992.

Yao, H., Bhatnagar, S., and Szepesv{\'a}ri, C. LMS-2: Towards an algorithm that is as cheap as LMS and almost as efficient as RLS. In \emph{Proceedings of the 48h Conference on Decision and Control (CDC)}, pp. 1181–1188. IEEE, 2009.

Zhang, J., He, T., Sra, S., and Jadbabaie, A. Why gradient clipping accelerates training: A theoretical justification for adaptivity. \emph{arXiv preprint arXiv:1905.11881}, 2019.

Zhang, S., Boehmer, W., and Whiteson, S. Deep residual reinforcement learning. In \emph{Proceedings of the 19th International Conference on Autonomous Agents and Multiagent Systems}, pp. 1611–1619, 2020.
 
Zhu, R. J. and Murray, J. M. Gradient descent temporal difference-difference learning. \emph{arXiv preprint  arXiv:2209.04624}, 2022.

\end{list}}

\fi

\newpage
\appendix
\onecolumn
\begin{center}
	\Large \bf Appendices
\end{center}

\section{Proof of  condition-number bounds}\label{app:proof cond}
In this appendix, we first present the proof of Theorem~\ref{th:cond} that involves bounds on the condition-number of $MSBE^V(\cdot)$.
We then establish similar bounds for $MSBE(\cdot)$ defined in \eqref{eq:MSBE}.

\subsection{Proof of  Theorem~\ref{th:cond}}\label{app:proof cond val}
Note that an MDP with a fixed policy boils down to a Markov chain with termination. 
Consider a Markov chain with termination that has $n$ non-terminal states, and let $P$ be its associated $n\times n$ transition matrix. 
Note that if transitions from a state can terminate with positive probability,  sum over the corresponding row of $P$ will be less than one.
Let
\begin{equation}\label{eq:def A}
A\defeq (I-\gamma P)^T(I-\gamma P).
\end{equation}
In the tabular setting and under uniform state distribution, we have $MSBE^V(\wb)=\wb^TA\wb/n$. Therefore, the condition-number $\cond$ of $MSBE^V(\cdot)$ is equal to the condition-number of $A$. 
Let $\lamax$  and $\lamin$ denote the largest and smallest eigenvalues of $A$, respectively.
It follows that
\begin{equation}\label{eq:eig ratio}
\cond = \frac{\lamax}{\lamin}.
\end{equation}

{\bf Proof of Part~(a).} 
We first propose an upper bound for $\lamin$ and then a lower bound for $\lamax$.
For states $i=1,\ldots,n$, let $l_i$ be the expected number of steps until termination when we start from state $i$ and follow the Markov chain's transitions.  
Then, for any state $i=1,\ldots,n$, we have
\begin{equation}\label{eq:Markov l}
l_i = 1+\sum_{j=1}^n P_{ij} l_j.
\end{equation}
Let $\lb \defeq \left[ l_1, \ldots, l_n  \right]^T$ 
be the vector representation of $l_1,\ldots,l_n$.  
Then, \eqref{eq:Markov l} can be written in the vector form as 
\begin{equation}\label{eq:lb 1}
\lb = \oneb +P\lb.
\end{equation}
where $\oneb$ is the vector of all ones. It then follows that 
\begin{equation}\label{eq:lb 2}
(I-\gamma P)\lb = \lb - \gamma P\lb = \lb - \gamma (\lb-\oneb) = (1-\gamma)  \lb +\gamma\oneb,
\end{equation}
where the second equality follows from \eqref{eq:lb 1}. 
Let $l\defeq (l_1+\cdots,l_n)/n$ be the mean of $l_1,\ldots,l_n$. 
Cauchy-Schwarz inequality implies that
\begin{equation}\label{eq:cauchy}
\frac{\|\lb\|^2}{n} =\frac{1}n \sum_{i=1}^n l_i^2  \ge \left(\frac{1}{n} \sum_{i=1}^n l_i\right)^2 =l^2.
\end{equation}

For the smallest eigenvalue of $A$, we have
\begin{equation} \label{eq:lamin}
\begin{split}
\lamin &\le \frac{\lb^TA\lb}{\|\lb\|^2}\\
&= \frac{\|(I-\gamma P)\lb\|^2}{\|\lb\|^2}\\
&= \frac{\|(1-\gamma )\lb + \gamma\oneb\|^2}{\|\lb\|^2}\\
&= \frac{(I-\gamma)^2\|\lb\|^2   +   2\gamma(1-\gamma) \oneb^T\lb  +\gamma^2 n }{\|\lb\|^2}\\
&= (I-\gamma)^2 +  \frac{2\gamma(1-\gamma) n l + \gamma^2 n }{\|\lb\|^2}\\
&\le (I-\gamma)^2 +  \frac{2\gamma(1-\gamma) l + \gamma^2 }{l^2}\\
& = \left(I-\gamma + \frac{\gamma}l \right)^2,
\end{split}
\end{equation}
where the first inequality follows from the definition of the smallest eigenvalue of a symmetric matrix, the first equality is due to the definition of $A$ in \eqref{eq:def A},
the second equality results from \eqref{eq:lb 2},
the fourth equality is from the definition of $l$, 
and the second inequality follows from \eqref{eq:cauchy}.

Let $\mbox{trace(A)}$ be the trace of $A$ defined as the sum of diagonal entries of $A$. 
It is well-known that the trace of any matrix is equal to the sum of eigenvalues of that matrix~\citer{(Strang, 2006)}\cite{strang2006LinearAlgebra}.
Therefore, for the largest eigenvalue of $A$, we have
\begin{equation}\label{eq:lamax}
\begin{split}
\lamax &\ge  \frac1n \mbox{trace}(A) \\
&= \frac1n \sum_{i=1}^n A_{ii}\\
&= \frac1n \sum_{i=1}^n \sum_{j=1}^n (I_{ji}-\gamma P_{ji})^2\\
&= \frac1n \sum_{i=1}^n \left((1-\gamma P_{ii})^2 + \sum_{j\ne i} \gamma^2 P_{ji}^2\right)\\
&\ge \frac1n \sum_{i=1}^n (1-\gamma P_{ii})^2 \\
&\ge \left(\frac1n \sum_{i=1}^n (1-\gamma P_{ii})\right)^2\\
&= \left(1- \frac{\gamma}n \sum_{i=1}^n  P_{ii}\right)^2\\
&= (1-\gamma h)^2,
\end{split}
\end{equation}
where the first inequality is because $\mbox{trace}(A)$ equals the sum of eigenvalues of $A$, 
the  first equality is from the definition of trace,
the second equality is due to the definition of $A$ in \eqref{eq:def A},
the third inequality follows from the Cauchy-Schwarz inequality, 
and the last equality is from the definition $h=\sum_{i=1}^n P_{ii}/n$ in the theorem statement.
Plugging \eqref{eq:lamin} and \eqref{eq:lamax} into \eqref{eq:eig ratio}, we obtain
\begin{equation*}
\cond = \frac{\lamax}{\lamin} 
\ge \frac{(1-\gamma h)^2}{\lamin}
\ge \frac{(1-\gamma h)^2}{\left(1-\gamma + {\gamma}/l \right)^2}
\ge \frac{(1-\gamma h)^2}{2(1-\gamma)^2 + 2\gamma^2/l^2 }
\ge \frac{(1-\gamma h)^2}{4}\, \min\left(\frac1{(1-\gamma)^2}, \frac{l^2}{\gamma^2} \right),
\end{equation*}
where the first and second inequalities are due to \eqref{eq:lamax} and \eqref{eq:lamin}, respectively.
This implies \eqref{eq:C1} and completes the proof of Part~(a) of Theorem~\ref{th:cond}.

{\bf Proof of Part~(b).} Consider an $n$-state Markov chain with transition matrix
\begin{equation}
P = \left[\begin{array}{cccc} 0&\cdots&0&1 \\ \vdots&\ddots&\vdots&\vdots  \\ 0&\cdots&0&1 \end{array}\right].
\end{equation}
In what follows, we derive  bounds on the largest and smallest eigenvalues of $A$ defined in \eqref{eq:def A}.
Let $\epsilon=\gamma/(n-1)$ and  $\vb=[-\epsilon,\ldots,-\epsilon,1]^T$. Then, $P\vb = \oneb$, and as a result,
\begin{equation}\label{eq:vb1}
\vb^TA\vb = \big\| (I-\gamma P)\vb \big\|^2
=\big\| \vb-\gamma \oneb \big\|^2
= (n-1)(\gamma+\epsilon)^2 + (1-\gamma)^2
\ge (n-1)(\gamma+\epsilon)^2
= \frac{n^2 \gamma^2}{n-1},
\end{equation}
where the first equality is from the definition of $A$ in \eqref{eq:def A}, and  the last equality is due to the definition of $\epsilon$. 
On the other hand, 
\begin{equation}\label{eq:vb2}
\| \vb\|^2 = (n-1)\epsilon^2+1 = \frac{\gamma^2}{n-1}+1 = \frac{n-1+\gamma^2}{n-1} \le \frac{n}{n-1}.
\end{equation}
It follows that
\begin{equation}\label{eq:lamaxb}
\lamax \ge \frac{\vb^TA\vb}{\|\vb\|}
\ge \frac{n^2\gamma^2/(n-1)}{n/(n-1)}
=n\gamma^2,
\end{equation}
where the second inequality is due to \eqref{eq:vb1} and \eqref{eq:vb2}.

In order to bound the smallest eigenvalue of $A$, let $\xb=[\gamma,\ldots,\gamma,1]^T$. Therefore $P\xb= \oneb$ and
\begin{equation}\label{eq:gam P x}
(I-\gamma P)\xb = \xb-\gamma P\xb = \xb-\gamma\oneb = [0,\ldots,0,1-\gamma]^T.
\end{equation}
It follows that 
\begin{equation}\label{eq:laminb}
\lamin \le \frac{\xb A \xb}{\|\xb\|^2} 
= \frac{\|(I-\gamma P)\xb\|^2}{\|\xb\|^2}
= \frac{(1-\gamma)^2}{\|\xb\|^2}
= \frac{(1-\gamma)^2}{(n-1)\gamma^2+1}
\le \frac{(1-\gamma)^2}{n\gamma^2}
\end{equation}
where the first equality is from the definition of $A$ in \eqref{eq:def A},
the second equality is due to \eqref{eq:gam P x},
and the third equality follows from the definition of $\xb$.
Plugging \eqref{eq:lamaxb} and \eqref{eq:laminb} into \eqref{eq:eig ratio}, we obtain
\begin{equation*}
\cond = \frac{\lamax}{\lamin} \ge \frac{n\gamma^2}{(1-\gamma)^2/(n\gamma^2)} = \frac{\gamma^4 n^2}{(1-\gamma)^2}.
\end{equation*}
This completes the proof of Part~(b) of Theorem~\ref{th:cond}.

\subsection{Condition-number of the MSBE defined in terms of action-values}\label{app:proof cond action-val}
Theorem~\ref{th:cond} involves bounds on the condition-number of $MSBE^V(\cdot)$ defined in \eqref{eq:MSBE v}. 
In this appendix, we establish similar bounds for $MSBE(\cdot)$ defined in \eqref{eq:MSBE}.

Given an MDP and a policy $\pi$,  we consider an \emph{induced augmented Markov chain} that is a Markov chain whose states are the state-action pairs of the MDP and its transition probabilities are as follows. 
For any $s,s'\in\mathcal{S}$ and $a,a'\in\mathcal{A}$, the probability of transition from $(s,a)$ to $(s',a')$ in the induced augmented Markov chain is
\begin{equation}\label{eq:aug mark}
p_\pi'(s',a'|s,a) \defeq \int_{r} p_\pi(s',a',r|s,a)\,dr
\end{equation}
where $p_\pi$ is defined in Section~\ref{sec:background}. 
We consider a tabular setting, and denote the condition-number of $MSBE_\distrib(\cdot)$ under uniform distribution $\distrib$ on state-action pairs by $\cond'$. We  let 
\begin{equation}
h' \defeq \frac1{nm} \sum_{s\in\mathcal{S}} \sum_{a\in\mathcal{A}}  p_\pi'(s,a|s,a)
\end{equation}
be the self-loop probability in the induced augmented Markov chain, where $n$ is the number of states and $m=|\mathcal{A}|$ is the number of actions.
Also let $l'$ be the expected number of steps until termination when starting from a uniformly random state-action pair.
The following proposition is the counterpart of Theorem~\ref{th:cond} for $\cond'$. 
\begin{proposition}\label{prop:cond q}
In the tabular case, the  following statements hold  for any discount factor $\gamma\in[0,1]$:
\begin{itemize}
	\item[a)]
	For any MDP and any policy $\pi$,
	\begin{equation} \label{eq:C1 2}
	\cond' \ge \frac{(1-\gamma h')^2}{4} \, \min\left( \frac1{(1-\gamma)^2},\, l'^2\right).
	\end{equation}
	\item[b)]
	For any $n,m>0$, there exists an MDP with $n$ states and $m$ actions, and a policy $\pi$ for which, $\cond'\ge\gamma^4 (nm)^2/(1-\gamma)^2$.
\end{itemize}
\end{proposition}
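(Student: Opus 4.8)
The plan is to reduce Proposition~\ref{prop:cond q} to Theorem~\ref{th:cond} by observing that, in the tabular case, $MSBE_{\textrm{unif}}(\cdot)$ is governed by exactly the same algebraic object as $MSBE^V_{\textrm{unif}}(\cdot)$, except that the underlying Markov chain is replaced by the induced augmented Markov chain on state-action pairs. Concretely, writing $q_\wb(s,a)=w_{(s,a)}$ indexed by state-action pairs, the residual is affine in $\wb$: stacking the pairs into a vector gives $\boldsymbol{\delta}_\wb = \bar{\mathbf{r}} - (I-\gamma P')\wb$, where $P'$ is the $(nm)\times(nm)$ transition matrix of the augmented chain with entries $p'_\pi(s',a'|s,a)$ from \eqref{eq:aug mark} and $\bar{\mathbf{r}}$ collects the expected rewards. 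Hence the Hessian of $MSBE_{\textrm{unif}}$ is proportional to $A'\defeq(I-\gamma P')^T(I-\gamma P')$, so that $\cond'$ equals the condition-number of $A'$. This is precisely the quantity \eqref{eq:eig ratio} analyzed in Appendix~\ref{app:proof cond val}, now for an $(nm)$-state chain; note that $\bar{\mathbf{r}}$ only shifts the minimizer and does not affect the Hessian, hence is irrelevant to $\cond'$.

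For Part~(a), I would invoke the argument proving Theorem~\ref{th:cond}~(a), applied to the augmented chain. The quantities appearing there translate directly: the average self-loop probability $\frac{1}{nm}\sum_{s,a} P'_{(s,a),(s,a)}$ is exactly $h'$, and the mean expected time to termination from a uniform start is $l'$. Since that derivation only used the Cauchy--Schwarz bound \eqref{eq:cauchy}, the trace-based lower bound \eqref{eq:lamax}, and the identity \eqref{eq:lb 2}, all of which hold for any transition matrix with termination, the bound \eqref{eq:C1 2} follows with $h'$ and $l'$ in place of $h$ and $l$.

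For Part~(b), the task is to realize the worst-case augmented chain of Theorem~\ref{th:cond}~(b) as a genuine MDP-plus-policy. I would fix a distinguished state $s^\star$ and action $a^\star$, set $p(s^\star\mid s,a)=1$ for every state-action pair $(s,a)$, and take the policy to be deterministic at $s^\star$ with $\pi(a^\star\mid s^\star)=1$. Then from every augmented state $(s,a)$ the chain moves to $(s^\star,a^\star)$ with probability $1$, so $P'$, after permuting $(s^\star,a^\star)$ to the last coordinate, is exactly the rank-one matrix whose last column is all ones appearing in \eqref{eq:vb1}. Applying the eigenvalue bounds \eqref{eq:lamaxb} and \eqref{eq:laminb} with $N=nm$ states then yields $\cond'\ge \gamma^4 (nm)^2/(1-\gamma)^2$.

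The main obstacle is the realizability step in Part~(b): the augmented transition matrix is not an arbitrary stochastic matrix but must factor as $p'_\pi(s',a'\mid s,a)=p(s'\mid s,a)\,\pi(a'\mid s')$, so not every worst-case $P'$ is attainable. The construction above shows the specific rank-one example \emph{does} factor, because the product structure collapses cleanly when both factors map deterministically onto $(s^\star,a^\star)$; I would still verify carefully that reindexing $(s^\star,a^\star)$ to the last coordinate leaves $\cond'$ unchanged (it does, being a similarity by a permutation matrix) so that the eigenvalue computation of Theorem~\ref{th:cond}~(b) applies unchanged.
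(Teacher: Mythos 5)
Your proposal is correct and follows essentially the same route as the paper, which proves Proposition~\ref{prop:cond q} by viewing the MDP-plus-policy as the induced augmented Markov chain of \eqref{eq:aug mark} and rerunning the proof of Theorem~\ref{th:cond} on it. In fact you are more careful than the paper's one-line argument on Part~(b), where you explicitly verify that the worst-case augmented chain factors as $p(s'\mid s,a)\,\pi(a'\mid s')$ and is therefore realizable by a genuine MDP and policy — a step the paper leaves implicit.
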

\begin{proof}
	We can perceive the dynamics under any given MDP and policy as an induced augmented Markov chain defined in \eqref{eq:aug mark}.
	Applying the proof of Theorem~\ref{th:cond} on this induced augmented Markov chain implies Proposition~\ref{prop:cond q}.
\end{proof}

\medskip

\section{Experiment on condition number under linear function approximation} \label{app:cond large random matrix}
We ran an experiment to investigate the growth of condition number, $\cond$, in an extended Boyan chain under linear function approximation.
Fig.~\ref{fig:boyan2} shows the dependence of $\cond$ on the size of extended Boyan chain, under standard Boyan feature vectors  (see Appendix~\ref{app:empirical details boyan} for details). The number of standard Boyan chain features $d$ in this experiments, satisfies $n=4d-3$, where $n$ is the number of states.
We observe that the condition number can grow very large under linear function approximation even when $d/n<1$ (in this case $d/n\simeq 1/4$).
\begin{figure}[t!]
	\centering 
	\includegraphics[width=.5\linewidth]{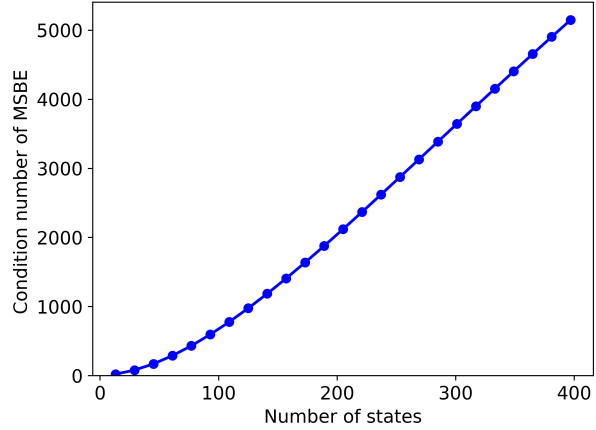}
	\caption{Condition-number of MSBE versus  number of states in an extended Boyan chain under linear function approximation with Boyan chain's standard features ($n=4d-3$). }\label{fig:boyan2}
\end{figure}

\section{ RAN as a proximal algorithm}\label{app: proximal view of RAN}
In this section, we provide further intuition for the RAN algorithm, by showing that Algorithm~\ref{alg:ran} can be equivalently derived as a proximal algorithm with momentum for minimizing MSBE.
Given an objective function $f$, a  proximal algorithm in its general form aims to find an approximate solution for a proximal operator of the following form in each iteration
\begin{equation}\label{eq:general proximal}
\wb_{t+1} \, \leftarrow\, \argmin{\wb} \Big( f(\wb) + D_t(\wb, \wb_{ref})  \Big),
\end{equation}
where $\wb_{ref}$ is a reference point, usually equal to $\wb_t$ or a trace of past $\wb$'s, and $D_t$ is  a penalty function (also called divergence) that encourages $\wb_{t+1}$ to stay close to $\wb_{ref}$.
In the special case that $D_t$ is a fixed Bregman divergence, \eqref{eq:general proximal} boils down to the mirror-descent algorithm \citer{(Juditsky \& Nemirovski, 2011)}\cite{JudiN11}.
However, in general, $D_t$ can be a time varying function and can depend on the local shape of the objective $f$. 

Back to the value estimation problem, for any consecutive state-action pairs $(s,a)$ and $(s',a')$, and any pair $\wb$ and $\wb'$ of weights, we let
\begin{equation}\label{eq:delta w sas'a'}
\delta_{\wb}(s,a,s',a') \defeq \gamma q_\wb(s',a')-q_\wb(s,a),
\end{equation} 
and 
\begin{equation}
\Delta\delta_{\wb,\wb'}(s,a,s',a') \defeq \delta_{\wb}(s,a,s',a') -\delta_{\wb'}(s,a,s',a').
\end{equation} 
Consider a divergence measure of the form 
$D_t(\wb,\wb_t)=c\, \Exp_{s,a}\Big[\Exp_{s',a'|s,a}\left[ \Delta\delta_{\wb,\wb_t}(s,a,s',a')^2  \right]\Big]$, for some constant $c>0$.
Then, the proximal operator in \eqref{eq:general proximal} turns into
\begin{equation} \label{eq:prox Delta delta}
\argmin{\wb}\, MSBE(\wb) + c\, \Exp_{s,a}\Big[\Exp_{s',a'|s,a}\left[ \Delta\delta_{\wb,\wb_t}(s,a,s',a')^2  \right]\Big].
\end{equation}

To obtain a low complexity incremental version of the above proximal updates, we consider doing sample updates along  the gradient of \eqref{eq:prox Delta delta}  at time $t$. 
For this purpose\footnote{Here we avoid using $\wb_{ref}=\wb_t$ because in this case, for any $(s,a,s',a')$,  $\Delta\delta_{\wb_t,\wb_t}(s,a,s',a')=0$. 
	This implies that $\nabla_\wb \Delta\delta_{\wb,\wb_t}(s,a,s',a')^2 \big|_{\wb=\wb_t}=0$. 
	As such,  the penalty would not affect gradient of the proximal objective at $\wb=\wb_t$.}, we work with $\wb_{ref}=\wb_{t-1}$ instead of $\wb_{ref}=\wb_{t}$,
i.e. we consider the proximal objective $MSBE(\wb) + c \,\Exp_{s,a}\Big[\Exp_{s',a'|s,a}\left[ \Delta\delta_{\wb,\wb_{t-1}}(s,a,s',a')^2  \right]$ and its unbiased sample gradient
\begin{equation}
\begin{split}
\gb_t &\defeq \big( \delta'_t - c\,\Delta\delta_{\wb_t,\wb_{t-1}}(S_t,A_t,S_{t+1}, A_{t+1})\big) \nabla\delta_t\\
&\simeq \big(\delta'_t - c\,(\wb_t-\wb_{t-1})^T\nabla\delta_t\big)\nabla\delta_t ,
\end{split}
\end{equation}
where the approximate equality is because $(\wb_t-\wb_{t-1})^T\nabla\delta_t$ is a first order approximation of $\Delta\delta_{\wb_t,\wb_{t-1}}$.
Let $\hat{\gb}_t=\big(\delta'_t - c(\wb_t-\wb_{t-1})^T\nabla\delta_t\big)\nabla\delta_t$
and consider the approximate gradient update $\wb\leftarrow \wb-\beta\hat{\gb}_t$. 
We further employ a momentum to reduce the variance of these updates. The resulting momentum based algorithm is of the form
\begin{equation}\label{eq:prox incremental}
\begin{split}
\mb_t &= \lambda \mb_{t-1} + \beta\hat{\gb}_t,\\
\wb_{t+1} &= \wb_t - \alpha \mb_t.
\end{split}
\end{equation}
Let $\eta=1/\alpha$. Since $\wb_t-\wb_{t-1}=\alpha\mb_{t-1}$, $\hat{\gb}_t$ simplifies to 
\begin{equation*}
\hat{\gb}_t = \big(\delta'_t - \mb_{t-1}^T\nabla\delta_t\big)\nabla\delta_t.
\end{equation*}
Plugging this into \eqref{eq:prox incremental}, we obtain updates that are identical to \eqref{eq:2TS}. 
This establishes our earlier claim that  the RAN algorithm can be equivalently formulated as a proximal algorithm with momentum for minimizing MSBE.

As another intuitive perspective for Algorithm~\ref{alg:ran}, we can perceive $\mb$ as a momentum of MSBE gradients, to which we add a correction term equal to a sample gradient of the penalty $\Exp_{s,a}\big[\Exp_{s',a'|s,a}\left[ \Delta\delta_{\wb,\wb_t}(s,a,s',a')^2  \right]\big]$, in each iteration. 
This momentum spreads the effect of each MSBE gradient over an $O(1/(1-\lambda))$-long horizon, which provides  enough time for the   correction updates to trim the direction of those MSBE gradients.
The correction terms are small along directions that $\nabla \delta$ is small, allowing  MSBE gradients to accumulate along those directions. 
Conversely, the correction terms are large along directions that $\nabla \delta$ is large, preventing $\mb$ from growing large along those directions. 
This leads to accelerated convergence along  the directions where $\nabla \delta$ is small, while preventing instability along directions where $\nabla \delta$ is large.

\section{Convergence of the RAN algorithm} \label{app:convergence proof}
In this appendix, we study conditions for convergence of Algorithm~\eqref{eq:2TS} under different choices of $\lambda$.
Convergence proofs of two-time-scale approaches are well-studied, and  generally involve tedious yet pretty  standard  statistical arguments.
As such, similar to several other papers \citer{(Konda \& Tsitsiklis, 1999; Dabney \& Thomas, 2014)}\cite{KondT99, DabnT14}, 
here we keep our arguments in a high level, and only discuss the steps of the proof  without going into the proof details. 
Throughout this Appendix, we consider irreducible and aperiodic Markov chains with finite number of states. We assume that the function approximation $Q_\wb(s,a)$ is a differentiable function of $\wb$ with bounded and Lipschitz constant derivatives. By boundedness of derivatives we mean that there exists a $C>0$ such that for consecutive state-action pairs $(s,a, s',a')$ and any $\wb$, 
\begin{equation}\label{eq:lipschitz}
	 \|\nabla \delta_{\wb}(s,a,s',a')\big\| < C,
\end{equation}
where $ \delta_{\wb}(s,a,s',a')$ is defined in \eqref{eq:delta w sas'a'}.
Given a distribution $\distrib$ over state and action pairs, for any $\wb\in\R^d$, we let
\begin{equation}\label{eq:def G hat}
\hat{G}_\wb \defeq \Exp_{s,a\sim\distrib}\Big[\Exp_{s',a'|s,a}\left[\nabla\delta_{\wb}(s,a,s',a') \, \nabla\delta_{\wb}(s,a,s',a')^T\right]\Big].
\end{equation}
We study convergence in three different regimes on $\lambda$, namely  $\lambda=1$, 
$\lambda=1-c\beta_t$ for some constant $c>0$,
and for a constant $\lambda<1$.

{\bf Case 1:} ($\lambda=1$). 
We assume that $\alpha_t$ and $\beta_t$ are decreasing positive sequneces, satisfying
\begin{equation}\label{eq:stepsizes of 2TS}
	\sum_{t=0}^\infty \alpha_t = \sum_{t=0}^\infty \beta_t = \infty, \qquad          \sum_{t=0}^\infty \alpha_t^2<\infty\,        \sum_{t=0}^\infty \beta_t^2 < \infty,      \qquad \frac{\alpha_t}{\beta_t}\to0.
\end{equation}
We further assume that $\hat{G}_\wb$ defined in \eqref{eq:def G hat}  is uniformly positive definite, in the sense that there is an $\epsilon>0$ such that for any $\wb$ and any $\xb\in\R^d$, we have 
$\xb^T\hat{G}_\wb \xb\ge \epsilon \|\xb\|^2$.
In this case, for fixed $\wb$, the updates in \eqref{eq:2TS} converge to $\hat{G}_\wb^{-1}\nabla MSBE_{\distrib}(\wb)$.
Since $\alpha_t/\beta_t\to0$,  $\wb$ is updated much slower than $\mb$. 
As such, $\mb$ is updated with much larger step-sizes, perceiving $\wb$ as almost stationary,  and therefore $\mb$ converges to an asymptotically small neighborhood of $\hat{G}_\wb^{-1}\nabla MSBE_{\distrib}(\wb)$. 
Since $\hat{G}_\wb$ is uniformly positive definite, this $\mb$ is an absolutely decreasing direction for $MSBE_{\distrib}(\wb)$. 
Then, standard proof techniques for stochastic approximation algorithms can be used to establish convergence of $\wb$ to a stationary point of MSBE.

{\bf Case 2:} ($\lambda=1-c\beta_t$, for some constant $c>0$).
In this case, the update rule \eqref{eq:2TS} boils down to:
\begin{equation}
	\begin{split}
	\mb\, &\leftarrow\lambda \mb + \beta_t (\delta'_t-\mb^T\nabla\delta_t\big)\nabla\delta_t\\
	&= (1-c\beta_t) \mb + \beta_t \big(\delta'_t-\mb^T\nabla\delta_t\big)\nabla\delta_t\\
	&=\mb - \beta_t \big((cI + \nabla\delta_t \nabla\delta_t^T)\mb   -\delta'_t  \nabla\delta_t \big)\\
	&=\mb - \beta_t \nabla_{\mb} \left(\frac12 \mb^T\big(cI + \nabla_\wb\delta_t \nabla_\wb\delta_t^T\big)\mb   -\delta'_t  \nabla_\wb\delta_t ^T\mb\right),
	\end{split}
\end{equation}
where $I$ is the identity matrix. 
Therefore, in each iteration, $\mb$ is updated along a sample gradient of the loss function $\hat{L}(\mb)=\mb^T \big(cI+ \hat{G}_\wb\big)\mb-\nabla_\wb MSBE(\wb)^T \mb$, where $\hat{G}_\wb$ is defined in \eqref{eq:def G hat}.  
Thus, assuming \eqref{eq:stepsizes of 2TS}, $\mb$ will asymptotically converge to the minimizer $\big(cI+ \hat{G}_\wb\big)^{-1}\nabla MSBE(\wb)$ of $\hat{L}$. 
Since $cI+ \hat{G}_\wb$ is uniformly positive definite, this is an absolutely descent direction for $MSBE(\wb)$. Then, standard proof techniques for stochastic approximation algorithms can be used to establish convergence of $\wb$ to a stationary point of MSBE.

{\bf Case 3: }  (Constant $\lambda<1$).
As opposed to the Cases~1 and~2, here we do not need two-time-scale step-sizes. More specifically, we assume that $\alpha_t>0$ is constant and $\beta_t$ is a decreasing positive sequence satisfying $\sum_{t=0}^\infty \beta_t = \infty$  and  $\sum_{t=0}^\infty \beta_t^2<\infty $. 
We also assume  that $\wb$ remains bounded which can be enforced either by projection of $\wb$ on a compact set or by adequate normalization of $\alpha_t$ (see \citer{(Konda \& Tsitsiklis, 1999)}\cite{KondT99} and \citer{(Kushner \& Yin, 2003)}\cite{KushY03}).
It follows from \eqref{eq:lipschitz} that for any time $t$,
\begin{equation}\label{eq:bound delta t}
	 |\delta_t| = |\delta_{\wb_t}(S_t,A_t,S_{t+1},A_{t+1})|  
	 \le C'+C\|\wb_t\| .
\end{equation}
for some constant $C'$. Then,  from \eqref{eq:2TS} we obtain
\begin{equation}
	\begin{split}
		\|\mb_t\| &= \big\|\lambda  \mb_{t-1}+ \beta_t\big(\delta_t'-\mb_{t-1}^T\nabla\delta_t\big)\nabla \delta_t  \big\|\\
		&\le \lambda \|\mb_{t-1}\| +\beta_t \big( |\delta_t'|  + \|\mb_{t-1}\|\cdot \|\nabla\delta_t\|\big) \|\nabla\delta_t\|\\
		&\le \lambda \|\mb_{t-1}\| +\beta_t \big( |\delta_t'|  + C \|\mb_{t-1}\|\big) C\\
		&\le\lambda \|\mb_{t-1}\| +\beta_t C \big( C'+C\|\wb_t\| + C \|\mb_{t-1}\|\big)  \\
		&= (\lambda +\beta_t C^2) \|\mb_{t-1}\| + \beta_t ( C'C+C^2\|\wb_t\|),
	\end{split}
\end{equation}
where the second and third inequalities follow from \eqref{eq:lipschitz} nad \eqref{eq:bound delta t}, respectively. 
When $\beta_t$ is small enough such that $\lambda +\beta_t C^2<1$, it follows from the boundedness assumption of $\wb$ that $\|\mb_t\| = O(\beta_t/(1-\lambda))$.
Therefore, \eqref{eq:2TS} can be expressed as
\begin{equation*}
\begin{split}
\mb_t &=  \lambda  \mb_{t-1}+ \beta_t \delta_t'\nabla \delta_t-\beta_t (\mb_{t-1}^T\nabla\delta_t) \nabla \delta_t	\\
&=  \lambda  \mb_{t-1}+ \beta_t \delta_t'\nabla \delta_t + O\big(\beta_t^2/(1-\lambda)\big) \\
&=  \sum_{\tau=0}^t \big(\lambda^\tau \beta_{t-\tau} \delta_{t-\tau}'\nabla \delta_{t-\tau} \big)+  O\big(\beta_t^2/(1-\lambda)^2\big).
\end{split}
\end{equation*}
Therefore, $\mb$ is essentially a momentum of sample gradient of MSBE.
Then, convergence of $\wb$ to a stationary point of MSBE follows from standard techniques for analysing SGD algorithms with momentum.

\medskip
\section{Asymptotic robustness of RAN to parameterization} \label{app:invariance}
In this appendix, we present a formal version of Proposition~\ref{prop:invariance} and   its proof. 
Here, we assume that $\lambda=1$ and consider an asymptotically small step-sizes regime with $\alpha\to0$ and $\alpha/\beta\to0$.
As discussed in Section~\ref{sec:alg1} and Appendix~\ref{app:convergence proof}, when $\alpha/\beta\to0$, $\mb$ in the RAN algorithm converges to the approximate Gauss-Newton direction
\begin{equation} \label{eq:def mGN}
\mb_{GN}(\wb,q) = \hat{G}_{\wb,q}^{-1} \,\,g_{\wb,q},
\end{equation}
where  
\begin{equation}\label{eq:Gwq}
\hat{G}_{\wb,q} =  \Exp_{s,a\sim\distrib}\Big[\Exp_{s',a'|s,a}\left[ \big(\gamma \nabla_\wb q_\wb(s',a') - \nabla_\wb q_\wb(s,a)\big)\, \big(\gamma \nabla_\wb q_\wb(s',a') - \nabla_\wb q_\wb(s,a)\big)^T \right]\Big],
\end{equation}
\begin{equation}\label{eq:gwq}
g_{\wb,q} = \Exp_{s,a\sim\distrib}\Big[\Exp_{s',a',r|s,a}\left[r+ \gamma q_\wb(s',a') - q_\wb(s,a)  \right]\,   \Exp_{s',a'|s,a} \big[\gamma \nabla_\wb q_\wb(s',a') - \nabla_\wb q_\wb(s,a)\big]\Big].
\end{equation}
where $\distrib$ is a distribution over state and action pairs. In this case, when $\alpha\to0$ the trajectory of $\wb$ approaches the trajectory of the following  Ordinary Differential Equations (ODE):
\begin{equation} \label{eq:ode}
\dot{\wb} = -\mb_{GN}(\wb,q),
\end{equation}
where $\dot{\wb}$ is the derivative of $\wb$ with respect to time.
We refer to \eqref{eq:ode} as the ODE formulation of two time-scale RAN for the $q$ function.

Let $u:\R^d\to\R^d$ be a (possibly non-linear) bijective and differentiable mapping. Suppose that the Jacobian of $u(\cdot)$ is invertible everywhere.
We define $\tilde{q}$ as a reparameterization of the $q$ function by $u$. More specifically, for any state-action pair $(s,a)$ and any $\vb\in\R^d$, we let
\begin{equation}\label{eq:qtilde}
\tilde{q}_\vb(s,a) = q_{u(\vb)}(s,a).
\end{equation}
Consider the ODE formulation of two time-scale RAN for the $\tilde{q}$ function:
\begin{equation} \label{eq:ode2}
\dot{\vb} = -\mb_{GN}(\vb,\tilde{q}),
\end{equation}
for $\mb_{GN}(\wb,\tilde{q})$ defined in \eqref{eq:def mGN}.

The following proposition is a formal version of Proposition~\ref{prop:invariance}. It draws a connection between  solution trajectories of \eqref{eq:ode} and \eqref{eq:ode2}. 
\begin{manualProposition}{6.1}[Formal] \label{prop:formal}
	Let $u:\R^d\to\R^d$ be a (possibly non-linear) bijective and differentiable mapping with invertible Jacobian, and consider the corresponding reparameterization $\tilde{q}$ of the $q$ function as in \eqref{eq:qtilde}. Let $\wb_t$ and $\vb_t$ for $t\ge0$ be solution trajectories of  the ODE formulations of two time-scale RAN in \eqref{eq:ode} and \eqref{eq:ode2}, respectively. 
	If $\wb_0=u\big(\vb_0\big)$, then $\wb_t=u(\vb_t)$, for all $t\ge0$.
	Moreover, $q_{\wb_t}(s,a) = \tilde{q}_{\vb_t}(s,a)$, for all times $t\ge0$ and all state-action pairs $(s,a)$.
\end{manualProposition}
The proposition suggests that the two-time scale RAN algorithm with asymptotically small step-sizes is invariant to any non-linear bijective reparameterization of the $q$ function.
\begin{proof}[Proof of Proposition~\ref{prop:formal}]
	For any $\vb\in\R^d$, let $J_\vb$ be the Jacobian matrix of $u(\vb)$. Then, for any state-action pair $(s,a)$ and any $\vb\in\R^d$, 
	\begin{equation}\label{eq:chain rule qhat}
	\nabla_\vb \tilde{q}_{\vb}(s,a) = \nabla_\vb q_{u(\vb)}(s,a) = J_\vb^T \,\nabla_\wb q_{\wb} (s,a) \big|_{\wb=u(\vb)},
	\end{equation}
	where the first equality is from the definition of $\tilde{q}$ in \eqref{eq:qtilde}, and the second equality follows from the chain rule for differentiation.
	Then, 
	\begin{equation}\label{eq:GhatJ}
	\begin{split}
	\hat{G}_{\vb,\tilde{q}} &=  \Exp_{s,a\sim\distrib}\Big[\Exp_{s',a'|s,a}\left[ \big(\gamma \nabla_\vb \tilde{q}_\vb(s',a') - \nabla_\vb \tilde{q}_\vb(s,a)\big)\, \big(\gamma \nabla_\vb \tilde{q}_\vb(s',a') - \nabla_\vb \tilde{q}_\vb(s,a)\big)^T \right]\Big]\\
	&=  \Exp_{s,a\sim\distrib}\Big[\Exp_{s',a'|s,a}\left[ J_\vb^T \big(\gamma \nabla_\wb q_\wb(s',a') - \nabla_\wb q_\wb(s,a)\big)\, \big(\gamma \nabla_\wb q_\wb(s',a') - \nabla_\wb q_\wb(s,a)\big)^T J_\vb \,\big|_{\wb=u(\vb)} \right]\Big]\\
	&=J^T_\vb \hat{G}_{\wb,q} \,J_\vb\,\,\big|_{\wb=u(\vb)},
	\end{split}
	\end{equation}
	where the first and the last equalities are due to \eqref{eq:Gwq} second equality follows from \eqref{eq:chain rule qhat}
	In the same vein, 
	\begin{equation}\label{eq:ghatJ}
	\begin{split}
	g_{\vb,\tilde{q}} &= \Exp_{s,a\sim\distrib}\Big[\Exp_{s',a',r|s,a}\left[r+ \gamma \tilde{q}_\vb(s',a') - \tilde{q}_\vb(s,a)  \right]\,   \Exp_{s',a'|s,a} \big[\gamma \nabla_\vb \tilde{q}_\vb(s',a') - \nabla_\vb \tilde{q}_\vb(s,a)\big]\Big]\\
	&= \Exp_{s,a\sim\distrib}\Big[\Exp_{s',a',r|s,a}\left[r+ \gamma \tilde{q}_\vb(s',a') - \tilde{q}_\vb(s,a)  \right]\,   \Exp_{s',a'|s,a} \big[\gamma J_\vb^T \nabla_\wb q_\wb(s',a') - J_\vb^T \nabla_\wb q_\wb(s,a)\big] \,\,\big|_{\wb=u(\vb)}\Big]\\
	&= \Exp_{s,a\sim\distrib}\Big[\Exp_{s',a',r|s,a}\left[r+ \gamma q_\wb(s',a') - q_\wb(s,a)  \right]\,   \Exp_{s',a'|s,a} \big[\gamma J_\vb^T \nabla_\wb q_\wb(s',a') - J_\vb^T \nabla_\wb q_\wb(s,a)\big] \,\,\big|_{\wb=u(\vb)}\Big]\\
	&=J_\vb^T g_{\wb,q} \,\,\big|_{\wb=u(\vb)},
	\end{split}
	\end{equation}
	where the first and the last equalities are due to \eqref{eq:gwq}, the  second equality follows from \eqref{eq:chain rule qhat}, and the third equality is from the definition of $\tilde{q}$ in \eqref{eq:qtilde}.
	Plugging \eqref{eq:GhatJ} and \eqref{eq:ghatJ} into \eqref{eq:ode} and \eqref{eq:ode2}, we obtain
	\begin{equation}
	\begin{split}
	 \dot{\vb} &= -\mb_{GN}(\vb,\tilde{q}) \\
	 &= -\hat{G}_{\vb,\tilde{q}}^{-1} \,\,g_{\vb,\tilde{q}} \\
	 &= -\big(J_\vb^T \,\hat{G}_{\wb,q} \,J\big)^{-1} \,\,g_{\vb,\tilde{q}} \,\,\big|_{\wb=u(\vb)}\\
	 &= -J^{-1}_\vb\,\hat{G}_{\wb,q} ^{-1}J_\vb^{-T}\, g_{\vb,\tilde{q}} \,\,\big|_{\wb=u(\vb)}\\
	 &= -J^{-1}_\vb\,\hat{G}_{\wb,q}^{-1} \,\,g_{\wb,q}  \,\,\big|_{\wb=u(\vb)}\\
	 &=-J^{-1}_\vb\,\mb_{GN}(\wb,q) \,\,\big|_{\wb=u(\vb)}\\
	 &= J^{-1}_\vb\,\dot{\wb} \,\,\big|_{\wb=u(\vb)},
	 \end{split}
	\end{equation}
	where the equalities are respectively due to    \eqref{eq:ode2}, 
	\eqref{eq:def mGN}, 
	\eqref{eq:GhatJ}, 
	the assumption that the Jacobian $J$ of $u$ is invertible,
	\eqref{eq:ghatJ},
	\eqref{eq:def mGN}, 
	and \eqref{eq:ode}. 
	Therefore, if at time $t$, $\wb_t=u(\vb_t)$, then $\dot{\wb_t} = J_\vb\dot{\vb_t} = \frac{d}{dt}u(\vb_t)$. 
	This together with the assumption $\wb_0=u(\vb_0)$ implies that $\wb_t=u\big(\vb_t\big)$, for all $t\ge0$.
	It then follows from the definition of $\tilde{q}$ in \eqref{eq:qtilde} that  $q_{\wb_t}(s,a) = \tilde{q}_{\vb_t}(s,a)$, for all times $t\ge0$ and all state-action pairs $(s,a)$.
	This completes the proof of Proposition~1.
\end{proof}


\medskip
\section{Pseudo code of RANS} \label{app:rans}
The pseudo code of the RANS algorithm is given in Algorithm~\ref{alg:rangs}. 
Note that in the two time scale regime, where $\alpha, \eta\to0$ and $\alpha/\eta\to0$, outlier-splitting would have no effect on the expected direction of $\mb$ updates. 
In this case, convergence of the RANS algorithm follows from  similar arguments to Appendix~\ref{app:convergence proof}.

\begin{algorithm}[tb]
	\caption{\quad RANS}
	\label{alg:rangs}
	\begin{algorithmic}
		\STATE {\bfseries Hyper parameters:} stepsize $\alpha$, $\eta\in(0,1)$ outlier threshold $\rho>1$, momentum and trace parameters $\lambda,\lambda'\in[0,1)$, outlier sampling probability $\sigma$. Good default values for $\eta$, $\rho$,  $\lambda$, $\lambda'$, and $p$ based on our experiments are $\eta=0.2$, $\rho=1.2$,  $\lambda=0.999$, $\lambda'=0.9999$, and $\sigma=0.02$ .
		\STATE {\bfseries Initialize:} $\mb=\mathbf{0}$, $\hat{\nub}=\mathbf{0}$, $\hat{\xi}=0$, and $\wb$.
		\FOR{$t=1,2,\ldots$}
		\STATE Take two independent samples $(S_{t+1},A_{t+1})$ and $(S_{t+1}',A_{t+1}')$, and consider $\delta_t$ and $\delta'_t$ as in \eqref{eq:delta t} and \eqref{eq:delta' t}.
		\STATE $\hat{\nub}_t\leftarrow \lambda'\hat{\nub}_{t-1}+(1-\lambda')(\nabla\delta_t)^2$    \hspace{1.1cm} \algcomment{$(\nabla\delta_t)^2$ is the  entrywise square vector of $\nabla\delta_t$}
		\STATE $\nub_t =  \hat{\nub}_t / (1-\lambda'^t)$     \hspace{3cm} \algcomment{bias-corrected trace of $(\nabla\delta_t)^2$}
		\STATE $\xi_t = \langle(1/{\sqrt{\nub_t}})\odot \nabla\delta_t, \nabla\delta_t\rangle$   \hspace{1.5cm} \algcomment{Outlier measure}
		\STATE $\hat{\xi}\leftarrow  \lambda'\hat{\xi}+(1-\lambda')\xi_t$
		\STATE $\bar\xi =  \hat{\xi} / (1-\lambda_\xi'^t)$   \hspace{3.3cm} \algcomment{bias-corrected trace of $\xi$}
		\STATE $k=\lfloor \xi_t/(\rho\bar{\xi}) \rfloor +1$   \hspace{2.9cm} \algcomment{outlier-splitting factor}
		\STATE $\betab={\eta}/(\rho \bar{\xi}_t \sqrt{\nub_t})$
		\STATE $\mb \leftarrow  \lambda  \mb+ \big(\delta_t'-\mb^T\nabla\delta_t\big)\betab\odot\nabla \delta_t/k$
		\STATE $\wb\leftarrow \wb-\alpha\mb$
		\IF{$k>1$}
		\STATE Store  $(S_t,A_t,S_{t+1}, A_{t+1}, r_t,,S_{t+1}', A_{t+1}', r_t', k,k-1)$ in the outlier buffer   \algcomment{the last entry in the\\ \hfill  tuple indicates the remaining number of future updates based on this sample}
		\ENDIF
		\STATE {\bfseries With probability}  $\min(1, \sigma*\mbox{length of outlier bufffer})${\bfseries:}    \algcomment{do an update using outlier buffer samples}
		\bindent
		\STATE Sample $(S_\tau,A_\tau,S_{\tau+1}, A_{\tau+1}, r_\tau,S_{\tau+1}', A_{\tau+1}', r_\tau', k',j)$ uniformly form the outlier buffer
		\STATE Let $\delta_t$ and $\delta'_t$ be as in \eqref{eq:delta t} and \eqref{eq:delta' t}, respectively
		\STATE $\xi = \langle(1/{\sqrt{\nub_t}})\odot \nabla\delta_\tau, \nabla\delta_\tau\rangle$
		\STATE  $k''=\max\big(k', \lfloor \xi/(\rho\bar{\xi}) \rfloor +1\big)$
		\STATE $\mb \leftarrow  \lambda  \mb+ \big(\delta_\tau'-\mb^T\nabla\delta_\tau'\big)\betab\odot\nabla \delta_\tau/k''$
		\STATE $\wb\leftarrow \wb-\alpha\mb$
		\IF{$j>1$}
		\bindentb
		\STATE Replace $(S_\tau,\ldots, k',j)$ with $(S_\tau,\ldots, k',j-1)$ in the  outlier buffer
		\eindent
		\ELSIF{$j=1$}
		\bindentb
		\STATE Remove $(S_\tau,\ldots, k',j)$ from the outlier buffer
		\eindent
		\ENDIF
		\eindent
		\ENDFOR
	\end{algorithmic}
\end{algorithm}

\medskip
\section{Details of experiments}
\subsection{Experiments of Section~\ref{sec:cond} and Appendix~\ref{app:cond large random matrix}} \label{app:empirical details boyan}
An  $n$-state extended Boyan chain with termination is a Markov chain with termination with states $0,1,\ldots,n-1$ and transition probabilities: $P(1\to0)=1$ and $P(i\to i-1)=P(i\to i-2)=0.5$ for $i=2,3,\ldots,n-1$. Furthermore, state $0$ goes to a terminal state with probability $1$.  
By standard features, we mean feature representations similar to \citer{(Boyan, 2002)}\cite{Boya02}. 
More specifically, given a $d>1$ and $n=4d-3$, we consider $d$ standard features for the $n$-state  extended Boyan-chain  as follows. In the $i$th standard feature vector for $i=0,1,\ldots,d-1$, the $j$th entry is equal to $1-|j-4i|/4$ for $j=\max(0,4i-3),\ldots, \min(d-1,4i+3)$, and all other entries equal zero.
For the special case of $n=13$ and $d=4$, the standard features would be the same as the features considered in \citer{(Boyan, 2002)}\cite{Boya02}.

By random binary features we mean an $n\times d$ feature matrix $\Phi$ with i.i.d. entries that take $0$ and $1$ values with equal probability.  
For evaluating value-errors in Fig.~\ref{fig:boyan} we consider reward $1$ for the transition at state $0$ (that leads to the terminal state), and reward $0$ for all other transitions. 
Note that the reward function does not affect  condition-number. 
Each point in this first experiment is the median of $100$ independent runs with different random feature matrices. 
We use median instead of mean to eliminate the adverse  effect of unbounded values in degenerate cases (e.g., infinite condition-number in the case of low rank feature matrices).

For both experiments in Fig.~\ref{fig:boyan} and~\ref{fig:boyan2} we use discount factor $\gamma=0.995$.   
Condition-numbers and value-errors in these experiments are computed with respect to uniform state distribution.  

\subsection{Experiment of Section~\ref{sec:alg1}} \label{app:exp Baird6} 
We considered  an extension of the Hallway environment with $50$ states and one action, in which each state $s=1,2\ldots,50$ transits to state $\min(s+1,50)$ with probability $0.99$, and transits to a terminal state with probability $0.01$.
The experiment was tabular with $\gamma=1$.
All rewards were set equal to zero, in which case the correct $q$-values are $q_\pi(s,\cdot)=0$, for all sates $s$.
All algorithms were initialized with $q(s,\cdot)=1$, for $s=1,\ldots,50$.
Fig.~\ref{fig:Baird50} illustrates learning curves of value-error $\sum_{s=1}^{50}\big(q(s,1)-q_\pi(s,1)\big)^2/50$ for RAN (Algorithm~\ref{alg:ran}), RG, and TD(0) algorithms.
Each point  is an average over 100 independent runs. 
We optimized the parameters for RG and Algorithm~\ref{alg:ran}.
The parameters used in the experiments are as follows. For RAN, we set $\alpha=0.025$, $\beta=0.4$, and $\lambda=0.9998$.
For RG and TD(0) we used $\alpha=0.5$.

\subsection{Experiment of Section~\ref{sec:GTD}} \label{app:exp Baird star}
We ran an experiment on Baird's Star environment~\citer{(Baird, 1995)}\cite{Bair95}. 
We performed off-policy learning with uniform state distribution.
All rewards were set to zero, in which case the correct $q$-values are $q_\pi(s,\cdot)=0$, for all sates $s$.
We used the initial point $\wb=[2,1,1,1,1,1,1]$ for all algorithms.
Fig.~\ref{fig:Baird star} illustrates  learning curves of value-error $\sum_{s=1}^{6}\big(q_\wb(s,1)-q_\pi(s,1)\big)^2/6$ for the RAN (Algorithm~\ref{alg:ran}), DSF-RAN (Algorithm~\ref{alg:dsf-rang}), RG, GTD2, and TD(0) algorithms.
Each point is an average over 10 independent runs.
TD(0) was unstable on this environment, and we chose a very small step-size $\alpha=10^{-5}$ for TD(0). 
For all other algorithms, we used optimized parameters as follows.
For RAN, we set $\alpha=2$, $\beta=0.15$, and $\lambda=0.995$.
For DSF-RAN, we set $\alpha=1$, $\beta=0.15$, $\eta=0.3$, and $\lambda=0.995$. 
For RG we used $\alpha=0.3$.
For GTD2 we set $\alpha=0.15$ and $\beta=0.3$.

\subsection{Experiment of Section~\ref{sec:experiments}} \label{app:exp control}
We ran an experiment on classic control tasks --Acrobot and Cartpole-- to test the performance of the RANS algorithm.
We used a single-layer neural network with 64 hidden ReLU units to learn the action-values, while choosing actions according to a softmax distribution on the action-values, $a\sim \textrm{Softmax} \big(q_\wb(s,\cdot)\big)$.
The network for $q_\wb$ was trained with three algorithms: TD(0), RG, and RANS (Algorithm~\ref{alg:rangs}).
Since RANS incorporates adaptive step-sizes, for fair comparison we trained TD(0) and RG using Adam optimizer.

For each algorithm, we performed training for 100 randomly generated random seeds.
For each training seed, 
in order to obtain an estimate of expected returns, we took an average over 400 independent environment simulations once every 500 steps. 
In Fig.~\ref{fig:control mean} we plot the average of these estimated expected returns over  the 100  training seeds. 

In the Cartpole environment, once an algorithm reaches score 500, it will not see any failure for many episodes in a row. In this case, the agent starts to forget the actions that prevented failure and led it to success, causing the performance to drop before it can  rise again. 
This phenomenon is known as catastrophic forgetting, and leads to large oscillations in learning curves.
In order to hide the effect of catastrophic forgetting on the learning curves, in Fig.~\ref{fig:control} we eliminated the 50 worst return estimates (corresponding to the 50 worst training seeds) at each point and plotted the mean of top 50 return estimates. The shades in Fig.~\ref{fig:control}  and Fig.~\ref{fig:control mean}  show the 99 percent confidence intervals over the averaged data.

We did not use replay buffer and batch updates. We used a small quadratic regularizer with coefficient $10^{-5}$ on the weights of the neural network.
For all algorithms, we used optimized parameters that maximize area under the curves, as follows.
In Acrobot experiment:
\begin{itemize}
\item
For TD(0), we used softmax coefficient $1$ and  Adam optimizer with step-size $0.005$.
\item
For RG, we used softmax coefficient $16$ and  Adam optimizer with step-size $0.001$.
\item
For RANS, we used softmax coefficient $16$ and  $\alpha=0.005$  and set all other parameters were set to their default values  described in Algorithm~\ref{alg:rangs}.
\end{itemize}
In Cartpole experiment:
\begin{itemize}
	\item
	For TD(0), we used softmax coefficient $0.005$ and  Adam optimizer with step-size $0.3$.
	\item
	For RG, we used softmax coefficient $0.002$ and  Adam optimizer with step-size $0.3$.
	\item
	For RANS, we used softmax coefficient $8$ and  $\alpha=0.001$  and set all other parameters were set to their default values  described in Algorithm~\ref{alg:rangs}.
\end{itemize}

	In another experiment, we evaluated the performance of RANS on simple MuJoCo environments--Hopper and HalfCheetah.
	We used the standard (300$\times$400) two layer DDPG neural network architecture \cite{Lillicrap2015DDPG}\citer{(Lillicrap et al., 2015)} for the actor and the Q-network. 
	The network for $q_\wb$ was trained with three algorithms: TD with delayed targets, RG, and RANS (Algorithm~\ref{alg:rangs}).
	For TD and RG we used Adam optimizer.
	The actor was trained using a deterministic policy gradient, in principle similar to  \cite{Lillicrap2015DDPG}\citer{(Lillicrap et al., 2015)} but with Adam optimizer and without delayed target actor.
	
	For each algorithm, we performed training for 100 randomly generated random seeds.
	In Fig.~\ref{fig:MuJoCo mean} we plot average returns over  the 100  training seeds. 
	
	We used no replay buffer and no batch updates in our experiments. We employed a small quadratic regularizer with coefficient $10^{-5}$ on the weights of the neural network for $q_\wb$ .
	For all algorithms, we optimized all parameters to maximize area under the return curves. Parameters used for the Hopper environment are as follows.
	\begin{itemize}
		\item
		For TD, we used delayed target with Polyak averaging factor $0.001$ and  Adam optimizer with step-size $0.005$ for Q-network updates. We set $\gamma=0.99$ and used actor learning rate $10^{-8}$.
		\item
		For RG, we used Adam optimizer with step-size $0.001$ for Q-network updates, actor learning rate $10^{-8}$, and $\gamma=0.999$.
		\item
		For RANS, we set $\alpha=0.001$ and all other parameters were set to their default values (see Algorithm~\ref{alg:rangs}). In this experiment, we sued  actor learning rate $10^{-7}$ and  $\gamma=0.999$.
	\end{itemize}
	Parameters used for the HalfCheetah environment are as follows:
	\begin{itemize}
		\item
		For TD, we used delayed target with Polyak averaging factor $4\times 10^{-5}$ and  Adam optimizer with step-size $3\times 10^{-5}$ for Q-network updates. We set $\gamma=0.99$ and used actor learning rate $10^{-5}$.
		\item
		For RG, we used Adam optimizer with step-size $10^{-5}$ for Q-network updates, actor learning rate $10^{-5}$, and $\gamma=0.999$.
		\item
		For RANS, we set $\alpha=0.003$ and all other parameters were set to their default values (see Algorithm~\ref{alg:rangs}). In this experiment, we sued  actor learning rate $10^{-6}$ and  $\gamma=0.999$.
	\end{itemize}

\begin{figure}[t!]
	\centering 
	\begin{subfigure}[t]{0.4\linewidth}
		\centering
		\includegraphics[width=1\linewidth]{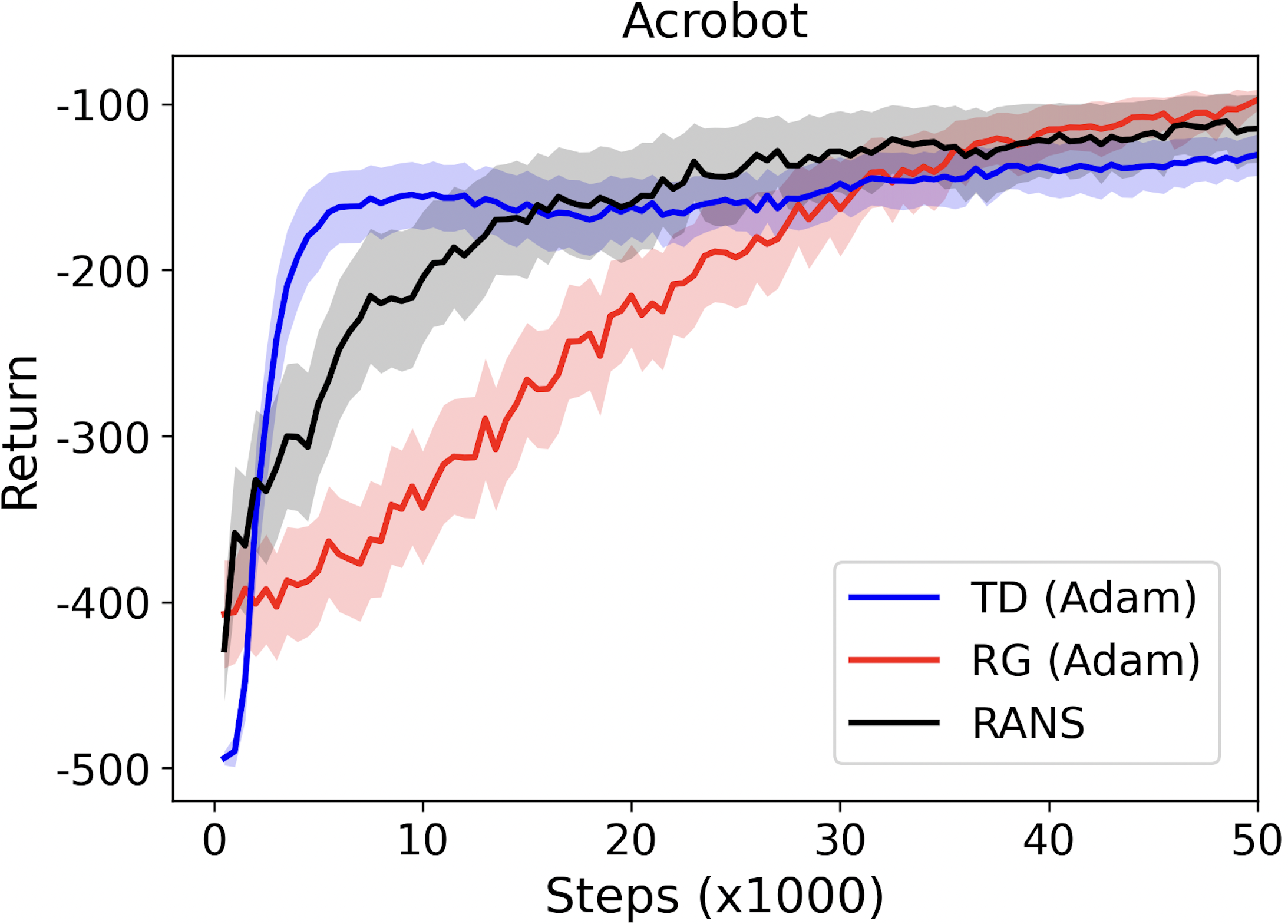}
	\end{subfigure}%
	\hspace{1.2cm}
	\begin{subfigure}[t]{.4\linewidth}
		\centering
		\includegraphics[width=1\linewidth]{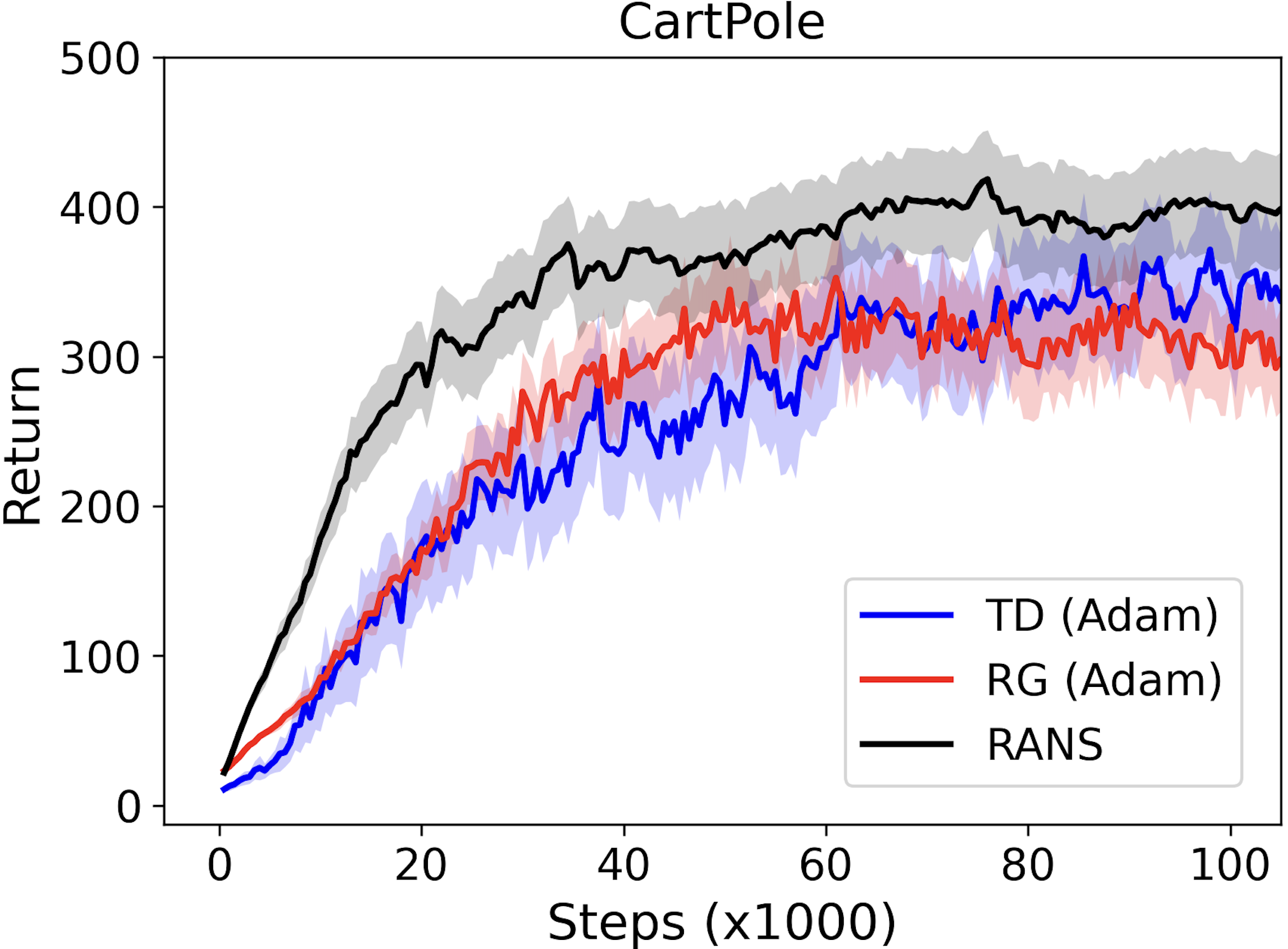}
	\end{subfigure}
	\caption{Performance of RANS, TD(0), and RG on classic control tasks. The only difference with Fig.\ref{fig:control} is that  here, each point is an average of estimate returns over 100 independent training. See Appendix~\ref{app:exp control} for details.}\label{fig:control mean}
\end{figure}

\begin{figure*}[t!]
	\centering 
	\begin{subfigure}[t]{0.4\linewidth}
		\centering
		\includegraphics[width=1\linewidth]{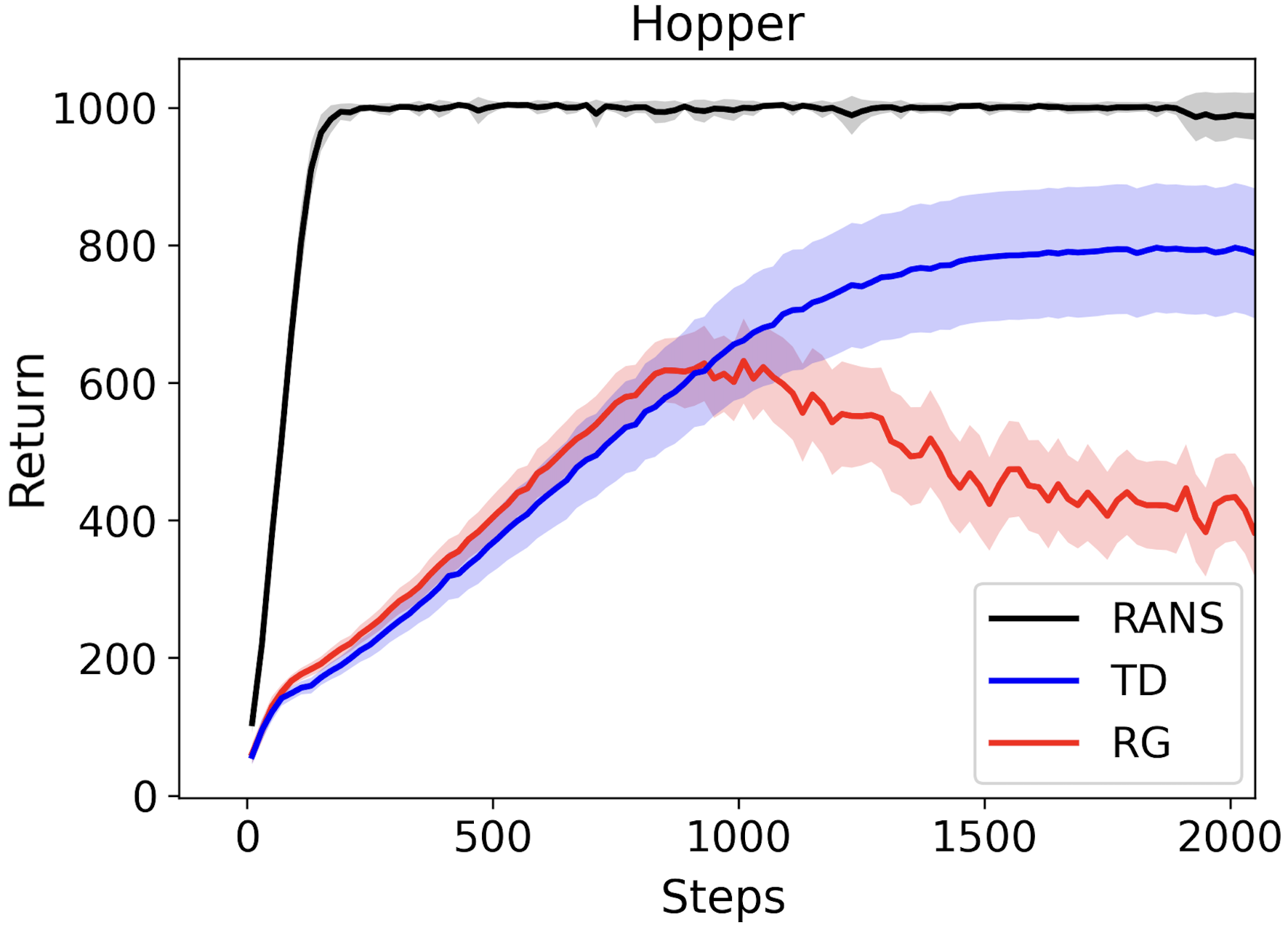}
	\end{subfigure}%
	\hspace{1.2cm}
	\begin{subfigure}[t]{.4\linewidth}
		\centering
		\includegraphics[width=1\linewidth]{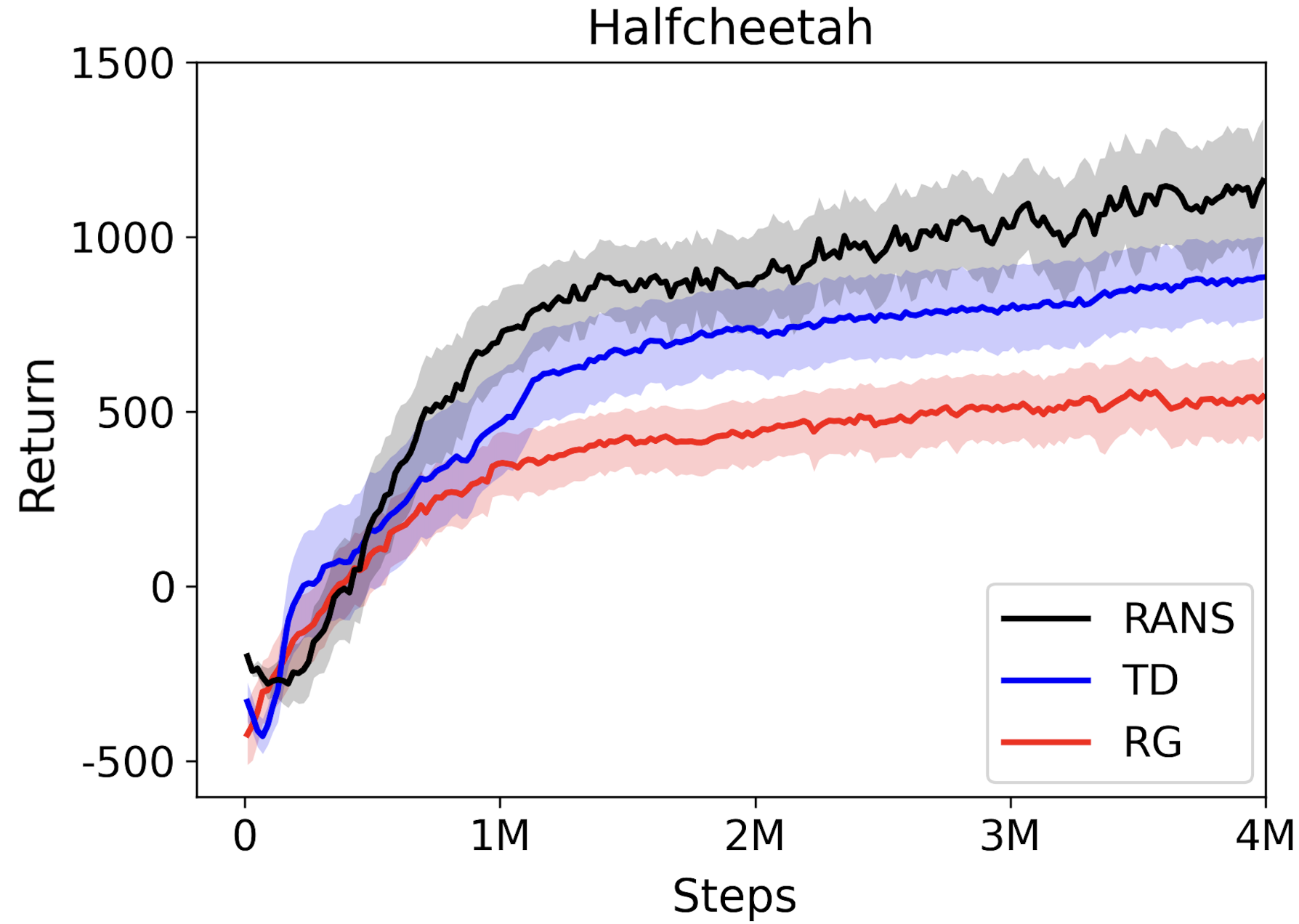}
	\end{subfigure}
	\caption{Performance of RANS, TD with target networks, and RG algorithms on simple MuJoCo environments. The only difference with Fig.\ref{fig:MuJoCo} is that  here, each point is an average of estimate returns over 100 independent training, whereas Fig.\ref{fig:MuJoCo} depicts the average of top 50 percent of estimated returns.}
	\label{fig:MuJoCo mean}
\end{figure*}

\end{document}